\documentclass[journal]{ieeetran}
\usepackage[pdftex]{hyperref}
\usepackage{siunitx}
\usepackage{enumitem}
\usepackage{amssymb,amsmath,amsthm,graphicx,latexsym,color,multirow}
\usepackage{tikz}
\usepackage{pgfplots}
\usetikzlibrary{calc}
\usepackage{mathtools}

\setlist[description]{leftmargin=0.5cm,labelindent=0cm}


\usepackage[english]{babel}
\usepackage[utf8]{inputenc}
\usepackage{algorithm}
\usepackage[noend]{algpseudocode}

\newtheorem{theorem}{Theorem}[section]

\theoremstyle{definition}
\newtheorem{example}[theorem]{Example}

\newtheorem{definition}[theorem]{Definition}




\newcommand{\norm}[1]{\left\lVert#1\right\rVert_2}

\newcommand{\Traces}{\mathsf{Traces}}

\newcommand{\dparticipation}{\mathsf{participation}}
\newcommand{\dinit}{\mathsf{init}}
\newcommand{\dinitial}{\mathsf{initial}}

\newcommand{\dsucc}{\mathsf{succ}}
\newcommand{\dnotsucc}{\mathsf{notsucc}}
\newcommand{\dresp}{\mathsf{resp}}
\newcommand{\dchainresp}{\mathsf{chainresp}}

\newcommand{\dprec}{\mathsf{prec}}

\newcommand{\dnotcoexist}{\mathsf{not-coexist}}
\newcommand{\dopresp}{\mathsf{opresp}}

\newcommand{\Const}{\mathsf{Const}}

\newcommand{\GETOUT}[1]{}
\definecolor{mygreen}{RGB}{80,160,80}
\definecolor{myonegrey}{RGB}{30,30,30}
\definecolor{mytwogrey}{RGB}{70,70,70}
\definecolor{mythreegrey}{RGB}{110,110,110}
\definecolor{myfourgrey}{RGB}{150,150,150}

\newcommand{\dG}{\mathsf{G}}
\newcommand{\dF}{\mathsf{F}}
\newcommand{\dN}{\mathsf{N}}

\newcommand{\dmodels}{\models}
\newcommand{\dimplies}{\Rightarrow}

\newcommand{\UniqueTraces}{\mathsf{UniqueTraces}}
\newcommand{\GoodTraces}{\mathsf{GoodTraces}}
\newcommand{\Answer}{\mathsf{Optimal}}

\newcommand{\tbox}[1]{\begin{minipage}[t]{1.5cm} \begin{center} \tiny\bf #1\end{center}\end{minipage}}
\newcommand{\zbox}[1]{\begin{minipage}[t]{.5cm} \begin{center} \tiny\bf #1\end{center}\end{minipage}}
\newcommand{\myarr}[3]{\draw [haup] (#1) to node[fill=white] {#3} (#2);}
\newcommand{\myoarr}[3]{\draw [haup] (#1) to node[fill=white] {#3}  (#2);}


\newcommand{\ADdiagramfigure}{
\begin{figure*}
\begin{center}
\begin{tabular}{|c@{\qquad}|@{\qquad}c|} \hline 
AD1 & AD2 \\ 
\begin{tikzpicture}[outline/.style={draw,thick,fill=black!3, minimum height=0.75cm,minimum width=1.5cm},
                    outline/.default=black,
                    haup/.style={thick,->},
                    hauq/.style={thick,}
					]
\def\myscl{2}
\node [outline] (N1) at (0,6) {\tbox{1. Finish dinner}};
\node [outline] (N2) at (0,4) {\tbox{2. Tidy table}};
\node [outline] (N3) at (0,2) {\tbox{3. Do jigsaw}};
\node [outline] (N4) at (0,0) {\tbox{4. Tidy jigsaw}};
\node [outline] (N5) at (2*\myscl,3) {\tbox{5. Watch bedtime show}};
\node [outline] (N6) at (2*\myscl,0) {\tbox{6. Get ready for bed }};
\node (Nimag) at (2*\myscl,6) {};
\myarr{N1}{N2}{\tiny$\dresp$}
\myarr{N2}{N3}{\tiny$\dprec$}
\myarr{N3}{N4}{\tiny$\dsucc$}
\myarr{N6}{N4}{\tiny$\dnotsucc$}
\myarr{N6}{N5}{\tiny$\dnotsucc$}
\myoarr{Nimag}{N5}{\tiny$\dprec$}
\draw [hauq] (N1) to (2*\myscl,6) to (2*\myscl,5.8);
\node at (0,6.5) {\tiny$\dparticipation$};
\node at (2*\myscl,-0.6) {\phantom{\tiny$\dparticipation$}};
\end{tikzpicture}
&
\begin{tikzpicture}[outline/.style={draw,thick,fill=black!3, minimum height=0.75cm,minimum width=1.5cm},
                    outline/.default=black,
                    haup/.style={thick,->},
                    hauq/.style={thick,}
					]
\def\myscl{2}
\node [outline] (N1) at (0,6) {\tbox{1. Finish dinner}};
\node [outline] (N2) at (0,4) {\tbox{2. Tidy table}};
\node [outline] (N3) at (0,2) {\tbox{3. Do jigsaw}};
\node [outline] (N4) at (0,0) {\tbox{4. Tidy jigsaw}};
\node [outline] (N5) at (2*\myscl,3) {\tbox{5. Watch bedtime show}};
\node [outline] (N6) at (2*\myscl,0) {\tbox{6. Get ready for bed }};
\node (Nimag) at (2*\myscl,6) {};
\myarr{N1}{N2}{\tiny$\dresp$}
\myarr{N2}{N3}{\tiny$\dprec$}
\myarr{N3}{N4}{\tiny$\dsucc$}
\myarr{N6}{N4}{\tiny$\dnotsucc$}
\myarr{N6}{N5}{\tiny$\dnotsucc$}
\myoarr{Nimag}{N5}{\tiny$\dprec$}
\draw [hauq] (N1) to (2*\myscl,6) to (2*\myscl,5.8);
\node at (0,6.5) {\tiny$\dparticipation$};
\node at (2*\myscl,-0.6) {\tiny$\dparticipation$};
\end{tikzpicture} \\ \hline
\end{tabular}
\end{center}
\caption{Illustration of the declarative processes After Dinner 1 and After Dinner 2 in Example~\ref{adk}.\label{adonetwo}}
\end{figure*}
}


\def\PHonediagramfigure{
\begin{figure*}[!h]
\begin{center}
\begin{tikzpicture}[outline/.style={draw,thick,fill=black!3, minimum height=0.75cm,minimum width=1.5cm},
                    outline/.default=black,
                    haup/.style={thick,->},
                    hauq/.style={thick,}
					]
\def\myscl{2}
\node [outline] (N2) at (0,2) {\tbox{2. X-ray}};
\node [outline] (N1) at (0,4) {\tbox{1. Examination}};
\node [outline] (N3) at (4*\myscl,4) {\tbox{3. Surgery}};
\node [outline] (N4) at (2*\myscl,2) {\tbox{4. Fixation}};
\node [outline] (N5) at (0,0) {\tbox{5. Medication}};
\node [outline] (N6) at (4*\myscl,0) {\tbox{6. Cast}};
\node [outline] (N7) at (6*\myscl,4) {\tbox{7. Rehab}};
\node [outline] (N8) at (6*\myscl,2) {\tbox{8. Sling}};
\node [draw,circle,minimum width=.15cm,fill=white] (Cen) at (4*\myscl,2) {\zbox{1of4}};
\node (Nimag) at (2*\myscl,4) {};
\node (Mimag) at (2*\myscl,0) {};
\myarr{N2}{N4}{\tiny$\dprec$}
\myarr{N4}{N6}{\tiny$\dnotcoexist$}
\myarr{N3}{N7}{\tiny$\dopresp$}
\draw [hauq] (N2) to (0,3) to (2*\myscl,3) to (2*\myscl,4) to (2.1*\myscl,4);
\myoarr{Nimag}{N3}{\tiny$\dprec$}
\draw [hauq] (N2) to (0,1) to (2*\myscl,1) to (2*\myscl,0) to (2.1*\myscl,0);
\myoarr{Mimag}{N6}{\tiny$\dprec$}
\node at (0,4.5) {\tiny$\dinit$};
\draw (Cen) to (N3);
\draw (Cen) to (N4);
\draw (Cen) to (N8);
\draw (Cen) to (N6);
\node (Pimag) at (2*\myscl,1.4) {};
\node (Qimag) at (4*\myscl,1.4) {};
\end{tikzpicture}
\caption{Illustration of the declarative process {\it{Patient Handler}} (that we will refer to as PH1)  from ~\cite{vda2009}\label{ph1}}
\end{center}
\end{figure*}
}


\newcommand{\PHtwodiagramfigure}{
\begin{figure*}[!h]
\begin{center}
\begin{tikzpicture}[outline/.style={draw,thick,fill=black!3, minimum height=0.75cm,minimum width=1.5cm},
                    outline/.default=black,
                    haup/.style={thick,->},
                    hauq/.style={thick,}
					]
\def\myscl{2}
\node [outline] (N53) at (0,2) {\tbox{53. Prescr. Strong Painkillers A}};
\node [outline] (N52) at (0,4) {\tbox{52. Prescr. Strong Painkillers B}};
\node [outline] (N51) at (0,6) {\tbox{51. Prescr. Weak Painkillers}};
\node [outline] (N2) at (0,8) {\tbox{2. X-ray}};
\node [outline] (N1) at (0,10) {\tbox{1. Examine Patient}};
\node [outline] (N10) at (1.2*\myscl,0) {\tbox{10. Prescr. anti-caogs}};
\node [outline] (N9) at (3*\myscl,2) {\tbox{9. Prescr. anti-inflams}};
\node [outline] (N3) at (6*\myscl,2) {\tbox{3. Perform surgery}};
\node [outline] (N7) at (6*\myscl,0) {\tbox{7. Let patient rest}};
\node [outline] (N6) at (6*\myscl,10) {\tbox{6. Apply cast}};
\node [outline] (N8) at (4*\myscl,9) {\tbox{8. Apply sling}};
\node [outline] (N4) at (4*\myscl,8) {\tbox{4. Apply fixation}};
\node [outline] (N12) at (4*\myscl,7) {\tbox{12. Apply bandage}};
\node [outline] (N11) at (3*\myscl,-2) {\tbox{11. Perform physio}};
\node [draw,circle,minimum width=.15cm,fill=white] (Cen) at (5.5*\myscl,8) {\zbox{1of5}};
\myarr{N1}{N2}{\tiny$\dprec$}
\myarr{N1}{N6}{\tiny$\dprec$}
\myarr{N3}{N6}{\tiny$\dresp$}
\myarr{N3}{N9}{\tiny$\dresp$}
\myarr{N3}{N7}{\tiny$\dopresp$}
\draw (Cen) to (N6);
\draw (Cen) to (N8);
\draw (Cen) to (N4);
\draw (Cen) to (N12);
\draw (Cen) to (5.85*\myscl,2.4);
\node at (0,10.5) {\tiny$\dinit$};
%
\draw [hauq] (-0.85,9.8) to (-1.5,9.8) to (-1.5,6) to (N51);
\myoarr{-1.5,9.8}{-1.5,6}{\tiny$\dprec$}
\draw [hauq] (-0.85,10) to (-2.15,10) to (-2.15,4) to (N52);
\myoarr{-2.15,10}{-2.15,4}{\tiny$\dprec$}
\draw [hauq] (-0.85,10.2) to (-2.8,10.2) to (-2.8,2) to (N53);
\myoarr{-2.8,10.2}{-2.8,2}{\tiny$\dprec$}
\draw [hauq] (N53) to (0,0) to (N10);
\node [fill=white] at (0,0) {\tiny$\dnotcoexist$};
\draw [hauq] (N53) to (N9);
\node [fill=white] at (1.8*\myscl,2) {\tiny$\dnotcoexist$};
\draw [hauq] (0.2,9.61) -- (0.2,9.2) -- (1.2*\myscl,9.2) -- (N10);
\myoarr{1.2*\myscl,9.2}{N10}{\tiny$\dprec$}
\draw [hauq] (0.4,9.61) -- (0.4,9.4) -- (1.4*\myscl,9.4) -- (1.4*\myscl,4) -- (3*\myscl,4) -- (N9);
\myoarr{1.4*\myscl,4}{3*\myscl,4}{\tiny$\dprec$}
\draw [hauq] (0.6,9.61) -- (0.6,9.51) -- (1.6*\myscl,9.51) -- (1.6*\myscl,5) -- (4*\myscl,5) -- (4*\myscl,3) -- (5.7*\myscl,3) -- (5.7*\myscl,2.4);
\myoarr{4*\myscl,3}{5.7*\myscl,3}{\tiny$\dopresp$}
\draw [hauq] (0.87,9.7) -- (1.9*\myscl,9.71) -- (1.9*\myscl,8) -- (3.57*\myscl,8);
\myoarr{1.9*\myscl,8}{3.57*\myscl,8}{\tiny$\dprec$}
\draw [hauq] (0.87,9.85) -- (2.2*\myscl,9.85) -- (2.2*\myscl,9) -- (3.57*\myscl,9);
\myoarr{2.2*\myscl,9}{3.57*\myscl,9}{\tiny$\dopresp$}
\draw [hauq] (5.57*\myscl,1.85) -- (5*\myscl,1.85) -- (5*\myscl,0) -- (1.64*\myscl,0);
\myoarr{5*\myscl,0}{1.64*\myscl,0}{\tiny$\dresp$}
\draw[dashed] ($(current bounding box.south east)+(3pt,+40pt)$) rectangle ($(current bounding box.north west)+(-3pt,3pt)$);
\node at (6*\myscl,-1.3) {$D_{PH2a}$};
\draw[dotted] ($(current bounding box.south east)+(6pt,-6pt)$) rectangle ($(current bounding box.north west)+(-6pt,6pt)$);
\node at (6*\myscl,-2.9) {$D_{PH2b}$};
\end{tikzpicture}
\end{center}
\caption{Both versions of Patient Handler 2 are illustrated in the diagram. The constraints for $D_{PH2a}$ are contained within the dashed rectangle, and the constraints for $D_{PH2b}$ are contained within the dotted rectangle. 
\label{ph2diag}}
\end{figure*}
}


\newcommand{\PHonetracesfigure}{
\begin{figure*}[!h]
\begin{center}
\tiny
\renewcommand{\arraystretch}{1}
\begin{tabular}{c@{}l@{}l@{}l@{}l@{}l@{}l@{}c} 
\multicolumn{8}{c}{\normalsize $\UniqueTraces(D_{PH1})$}\\[0.2em] \hline
length & \multicolumn{6}{c}{traces} & number \\ \hline\hline
2 &
$\begin{array}{l}
(1,8)
\end{array}$&
&&&&&1\\ \hline
3 & 
$\begin{array}[t]{l}
(1, 2, 3) \\
(1, 2, 4) 
\end{array}$&
$\begin{array}[t]{l}
(1, 2, 6)
\end{array}$&
$\begin{array}[t]{l}
(1, 5, 8)\\
\end{array}$&
$\begin{array}[t]{l}
(1, 8, 2)\\
\end{array}$&
$\begin{array}[t]{l}
(1, 8, 5)\\
\end{array}$&
$\begin{array}[t]{l}
(1,2,8)
\end{array}$&
 7 \\ \hline
4 &
$\begin{array}[t]{l}
(1, 2, 3, 4)\\
(1, 2, 3, 5)\\
(1, 2, 3, 6)\\
(1, 2, 3, 7)\\
(1, 2, 3, 8)
\end{array}$ &
$\begin{array}[t]{l}
(1, 2, 4, 3)\\
(1, 2, 4, 5)\\
(1, 2, 4, 8)\\
(1, 2, 5, 3)\\
(1, 2, 5, 4)
\end{array}$ &
$\begin{array}[t]{l}
(1, 2, 5, 6)\\
(1, 2, 5, 8)\\
(1, 2, 6, 3)\\
(1, 2, 6, 5)\\
(1, 2, 6, 8)
\end{array}$ &
$\begin{array}[t]{l}
(1, 2, 8, 3)\\
(1, 2, 8, 4)\\
(1, 2, 8, 5)\\
(1, 2, 8, 6)\\
(1, 5, 2, 3)
\end{array}$ &
$\begin{array}[t]{l}
(1, 5, 2, 4)\\
(1, 5, 2, 6)\\
(1, 5, 2, 8)\\
(1, 5, 8, 2)\\
(1, 8, 2, 3)
\end{array}$ &
$\begin{array}[t]{l}
(1, 8, 2, 4)\\
(1, 8, 2, 5)\\
(1, 8, 2, 6)\\
(1, 8, 5, 2)
\end{array}$ & 29 \\ \hline
5 & 
$\begin{array}[t]{l}
(1, 2, 3, 4, 5)\\
(1, 2, 3, 4, 7)\\
(1, 2, 3, 4, 8)\\
(1, 2, 3, 5, 4)\\
(1, 2, 3, 5, 6)\\
(1, 2, 3, 5, 7)\\
(1, 2, 3, 5, 8)\\
(1, 2, 3, 6, 5)\\
(1, 2, 3, 6, 7)\\
(1, 2, 3, 6, 8)\\
(1, 2, 3, 7, 4)\\
(1, 2, 3, 7, 5)\\
(1, 2, 3, 7, 6)\\
(1, 2, 3, 7, 8)
\end{array}$ &
$\begin{array}[t]{l}
(1, 2, 3, 8, 4)\\
(1, 2, 3, 8, 5)\\
(1, 2, 3, 8, 6)\\
(1, 2, 3, 8, 7)\\
(1, 2, 4, 3, 5)\\
(1, 2, 4, 3, 7)\\
(1, 2, 4, 3, 8)\\
(1, 2, 4, 5, 3)\\
(1, 2, 4, 5, 8)\\
(1, 2, 4, 8, 3)\\
(1, 2, 4, 8, 5)\\
(1, 2, 5, 3, 4)\\
(1, 2, 5, 3, 6)\\
(1, 2, 5, 3, 7)
\end{array}$ &
$\begin{array}[t]{l}
(1, 2, 5, 3, 8)\\
(1, 2, 5, 4, 3)\\
(1, 2, 5, 4, 8)\\
(1, 2, 5, 6, 3)\\
(1, 2, 5, 6, 8)\\
(1, 2, 5, 8, 3)\\
(1, 2, 5, 8, 4)\\
(1, 2, 5, 8, 6)\\
(1, 2, 6, 3, 5)\\
(1, 2, 6, 3, 7)\\
(1, 2, 6, 3, 8)\\
(1, 2, 6, 5, 3)\\
(1, 2, 6, 5, 8)\\
(1, 2, 6, 8, 3)
\end{array}$ &
$\begin{array}[t]{l}
(1, 2, 6, 8, 5)\\
(1, 2, 8, 3, 4)\\
(1, 2, 8, 3, 5)\\
(1, 2, 8, 3, 6)\\
(1, 2, 8, 3, 7)\\
(1, 2, 8, 4, 3)\\
(1, 2, 8, 4, 5)\\
(1, 2, 8, 5, 3)\\
(1, 2, 8, 5, 4)\\
(1, 2, 8, 5, 6)\\
(1, 2, 8, 6, 3)\\
(1, 2, 8, 6, 5)\\
(1, 5, 2, 3, 4)\\
(1, 5, 2, 3, 6)
\end{array}$ &
$\begin{array}[t]{l}
(1, 5, 2, 3, 7)\\
(1, 5, 2, 3, 8)\\
(1, 5, 2, 4, 3)\\
(1, 5, 2, 4, 8)\\
(1, 5, 2, 6, 3)\\
(1, 5, 2, 6, 8)\\
(1, 5, 2, 8, 3)\\
(1, 5, 2, 8, 4)\\
(1, 5, 2, 8, 6)\\
(1, 5, 8, 2, 3)\\
(1, 5, 8, 2, 4)\\
(1, 5, 8, 2, 6)\\
(1, 8, 2, 3, 4)\\
(1, 8, 2, 3, 5)
\end{array}$ &
$\begin{array}[t]{l}
(1, 8, 2, 3, 6)\\
(1, 8, 2, 3, 7)\\
(1, 8, 2, 4, 3)\\
(1, 8, 2, 4, 5)\\
(1, 8, 2, 5, 3)\\
(1, 8, 2, 5, 4)\\
(1, 8, 2, 5, 6)\\
(1, 8, 2, 6, 3)\\
(1, 8, 2, 6, 5)\\
(1, 8, 5, 2, 3)\\
(1, 8, 5, 2, 4)\\
(1, 8, 5, 2, 6)
\end{array}$ & 82 \\ \hline
6 & 
$\begin{array}[t]{l}
(1, 2, 3, 4, 5, 7)\\
(1, 2, 3, 4, 5, 8)\\
(1, 2, 3, 4, 7, 5)\\
(1, 2, 3, 4, 7, 8)\\
(1, 2, 3, 4, 8, 5)\\
(1, 2, 3, 4, 8, 7)\\
(1, 2, 3, 5, 4, 7)\\
(1, 2, 3, 5, 4, 8)\\
(1, 2, 3, 5, 6, 7)\\
(1, 2, 3, 5, 6, 8)\\
(1, 2, 3, 5, 7, 4)\\
(1, 2, 3, 5, 7, 6)\\
(1, 2, 3, 5, 7, 8)\\
(1, 2, 3, 5, 8, 4)\\
(1, 2, 3, 5, 8, 6)\\
(1, 2, 3, 5, 8, 7)\\
(1, 2, 3, 6, 5, 7)\\
(1, 2, 3, 6, 5, 8)\\
(1, 2, 3, 6, 7, 5)\\
(1, 2, 3, 6, 7, 8)\\
(1, 2, 3, 6, 8, 5)\\
(1, 2, 3, 6, 8, 7)\\
(1, 2, 3, 7, 4, 5)\\
(1, 2, 3, 7, 4, 8)\\
(1, 2, 3, 7, 5, 4)\\
(1, 2, 3, 7, 5, 6)\\
(1, 2, 3, 7, 5, 8)
\end{array}$
&
$\begin{array}[t]{l}
(1, 2, 3, 7, 6, 5)\\
(1, 2, 3, 7, 6, 8)\\
(1, 2, 3, 7, 8, 4)\\
(1, 2, 3, 7, 8, 5)\\
(1, 2, 3, 7, 8, 6)\\
(1, 2, 3, 8, 4, 5)\\
(1, 2, 3, 8, 4, 7)\\
(1, 2, 3, 8, 5, 4)\\
(1, 2, 3, 8, 5, 6)\\
(1, 2, 3, 8, 5, 7)\\
(1, 2, 3, 8, 6, 5)\\
(1, 2, 3, 8, 6, 7)\\
(1, 2, 3, 8, 7, 4)\\
(1, 2, 3, 8, 7, 5)\\
(1, 2, 3, 8, 7, 6)\\
(1, 2, 4, 3, 5, 7)\\
(1, 2, 4, 3, 5, 8)\\
(1, 2, 4, 3, 7, 5)\\
(1, 2, 4, 3, 7, 8)\\
(1, 2, 4, 3, 8, 5)\\
(1, 2, 4, 3, 8, 7)\\
(1, 2, 4, 5, 3, 7)\\
(1, 2, 4, 5, 3, 8)\\
(1, 2, 4, 5, 8, 3)\\
(1, 2, 4, 8, 3, 5)\\
(1, 2, 4, 8, 3, 7)\\
(1, 2, 4, 8, 5, 3)
\end{array}$
&
$\begin{array}[t]{l}
(1, 2, 5, 3, 4, 7)\\
(1, 2, 5, 3, 4, 8)\\
(1, 2, 5, 3, 6, 7)\\
(1, 2, 5, 3, 6, 8)\\
(1, 2, 5, 3, 7, 4)\\
(1, 2, 5, 3, 7, 6)\\
(1, 2, 5, 3, 7, 8)\\
(1, 2, 5, 3, 8, 4)\\
(1, 2, 5, 3, 8, 6)\\
(1, 2, 5, 3, 8, 7)\\
(1, 2, 5, 4, 3, 7)\\
(1, 2, 5, 4, 3, 8)\\
(1, 2, 5, 4, 8, 3)\\
(1, 2, 5, 6, 3, 7)\\
(1, 2, 5, 6, 3, 8)\\
(1, 2, 5, 6, 8, 3)\\
(1, 2, 5, 8, 3, 4)\\
(1, 2, 5, 8, 3, 6)\\
(1, 2, 5, 8, 3, 7)\\
(1, 2, 5, 8, 4, 3)\\
(1, 2, 5, 8, 6, 3)\\
(1, 2, 6, 3, 5, 7)\\
(1, 2, 6, 3, 5, 8)\\
(1, 2, 6, 3, 7, 5)\\
(1, 2, 6, 3, 7, 8)\\
(1, 2, 6, 3, 8, 5)\\
(1, 2, 6, 3, 8, 7)
\end{array}$
&
$\begin{array}[t]{l}
(1, 2, 6, 5, 3, 7)\\
(1, 2, 6, 5, 3, 8)\\
(1, 2, 6, 5, 8, 3)\\
(1, 2, 6, 8, 3, 5)\\
(1, 2, 6, 8, 3, 7)\\
(1, 2, 6, 8, 5, 3)\\
(1, 2, 8, 3, 4, 5)\\
(1, 2, 8, 3, 4, 7)\\
(1, 2, 8, 3, 5, 4)\\
(1, 2, 8, 3, 5, 6)\\
(1, 2, 8, 3, 5, 7)\\
(1, 2, 8, 3, 6, 5)\\
(1, 2, 8, 3, 6, 7)\\
(1, 2, 8, 3, 7, 4)\\
(1, 2, 8, 3, 7, 5)\\
(1, 2, 8, 3, 7, 6)\\
(1, 2, 8, 4, 3, 5)\\
(1, 2, 8, 4, 3, 7)\\
(1, 2, 8, 4, 5, 3)\\
(1, 2, 8, 5, 3, 4)\\
(1, 2, 8, 5, 3, 6)\\
(1, 2, 8, 5, 3, 7)\\
(1, 2, 8, 5, 4, 3)\\
(1, 2, 8, 5, 6, 3)\\
(1, 2, 8, 6, 3, 5)\\
(1, 2, 8, 6, 3, 7)\\
(1, 2, 8, 6, 5, 3)
\end{array}$ &
$\begin{array}[t]{l}
(1, 5, 2, 3, 4, 7)\\
(1, 5, 2, 3, 4, 8)\\
(1, 5, 2, 3, 6, 7)\\
(1, 5, 2, 3, 6, 8)\\
(1, 5, 2, 3, 7, 4)\\
(1, 5, 2, 3, 7, 6)\\
(1, 5, 2, 3, 7, 8)\\
(1, 5, 2, 3, 8, 4)\\
(1, 5, 2, 3, 8, 6)\\
(1, 5, 2, 3, 8, 7)\\
(1, 5, 2, 4, 3, 7)\\
(1, 5, 2, 4, 3, 8)\\
(1, 5, 2, 4, 8, 3)\\
(1, 5, 2, 6, 3, 7)\\
(1, 5, 2, 6, 3, 8)\\
(1, 5, 2, 6, 8, 3)\\
(1, 5, 2, 8, 3, 4)\\
(1, 5, 2, 8, 3, 6)\\
(1, 5, 2, 8, 3, 7)\\
(1, 5, 2, 8, 4, 3)\\
(1, 5, 2, 8, 6, 3)\\
(1, 5, 8, 2, 3, 4)\\
(1, 5, 8, 2, 3, 6)\\
(1, 5, 8, 2, 3, 7)\\
(1, 5, 8, 2, 4, 3)\\
(1, 5, 8, 2, 6, 3)\\
(1, 8, 2, 3, 4, 5)
\end{array}$ &
$\begin{array}[t]{l}
(1, 8, 2, 3, 4, 7)\\
(1, 8, 2, 3, 5, 4)\\
(1, 8, 2, 3, 5, 6)\\
(1, 8, 2, 3, 5, 7)\\
(1, 8, 2, 3, 6, 5)\\
(1, 8, 2, 3, 6, 7)\\
(1, 8, 2, 3, 7, 4)\\
(1, 8, 2, 3, 7, 5)\\
(1, 8, 2, 3, 7, 6)\\
(1, 8, 2, 4, 3, 5)\\
(1, 8, 2, 4, 3, 7)\\
(1, 8, 2, 4, 5, 3)\\
(1, 8, 2, 5, 3, 4)\\
(1, 8, 2, 5, 3, 6)\\
(1, 8, 2, 5, 3, 7)\\
(1, 8, 2, 5, 4, 3)\\
(1, 8, 2, 5, 6, 3)\\
(1, 8, 2, 6, 3, 5)\\
(1, 8, 2, 6, 3, 7)\\
(1, 8, 2, 6, 5, 3)\\
(1, 8, 5, 2, 3, 4)\\
(1, 8, 5, 2, 3, 6)\\
(1, 8, 5, 2, 3, 7)\\
(1, 8, 5, 2, 4, 3)\\
(1, 8, 5, 2, 6, 3)
\end{array}$ & 160 \\ \hline
7 & 
$\begin{array}[t]{l}
(1, 2, 3, 4, 5, 7, 8) \\
(1, 2, 3, 4, 5, 8, 7) \\
(1, 2, 3, 4, 7, 5, 8) \\
(1, 2, 3, 4, 7, 8, 5) \\
(1, 2, 3, 4, 8, 5, 7) \\
(1, 2, 3, 4, 8, 7, 5) \\
(1, 2, 3, 5, 4, 7, 8) \\
(1, 2, 3, 5, 4, 8, 7) \\
(1, 2, 3, 5, 6, 7, 8) \\
(1, 2, 3, 5, 6, 8, 7) \\
(1, 2, 3, 5, 7, 4, 8) \\
(1, 2, 3, 5, 7, 6, 8) \\
(1, 2, 3, 5, 7, 8, 4) \\
(1, 2, 3, 5, 7, 8, 6) \\
(1, 2, 3, 5, 8, 4, 7) \\
(1, 2, 3, 5, 8, 6, 7) \\
(1, 2, 3, 5, 8, 7, 4) \\
(1, 2, 3, 5, 8, 7, 6) \\
(1, 2, 3, 6, 5, 7, 8) \\
(1, 2, 3, 6, 5, 8, 7) \\
(1, 2, 3, 6, 7, 5, 8) \\
(1, 2, 3, 6, 7, 8, 5) \\
(1, 2, 3, 6, 8, 5, 7) \\
(1, 2, 3, 6, 8, 7, 5) \\
(1, 2, 3, 7, 4, 5, 8) \\
(1, 2, 3, 7, 4, 8, 5) \\
(1, 2, 3, 7, 5, 4, 8) \\
(1, 2, 3, 7, 5, 6, 8) \\
(1, 2, 3, 7, 5, 8, 4) \\
(1, 2, 3, 7, 5, 8, 6)
\end{array}$ &
$\begin{array}[t]{l}
(1, 2, 3, 7, 6, 5, 8) \\
(1, 2, 3, 7, 6, 8, 5) \\
(1, 2, 3, 7, 8, 4, 5) \\
(1, 2, 3, 7, 8, 5, 4) \\
(1, 2, 3, 7, 8, 5, 6) \\
(1, 2, 3, 7, 8, 6, 5) \\
(1, 2, 3, 8, 4, 5, 7) \\
(1, 2, 3, 8, 4, 7, 5) \\
(1, 2, 3, 8, 5, 4, 7) \\
(1, 2, 3, 8, 5, 6, 7) \\
(1, 2, 3, 8, 5, 7, 4) \\
(1, 2, 3, 8, 5, 7, 6) \\
(1, 2, 3, 8, 6, 5, 7) \\
(1, 2, 3, 8, 6, 7, 5) \\
(1, 2, 3, 8, 7, 4, 5) \\
(1, 2, 3, 8, 7, 5, 4) \\
(1, 2, 3, 8, 7, 5, 6) \\
(1, 2, 3, 8, 7, 6, 5) \\
(1, 2, 4, 3, 5, 7, 8) \\
(1, 2, 4, 3, 5, 8, 7) \\
(1, 2, 4, 3, 7, 5, 8) \\
(1, 2, 4, 3, 7, 8, 5) \\
(1, 2, 4, 3, 8, 5, 7) \\
(1, 2, 4, 3, 8, 7, 5) \\
(1, 2, 4, 5, 3, 7, 8) \\
(1, 2, 4, 5, 3, 8, 7) \\
(1, 2, 4, 5, 8, 3, 7) \\
(1, 2, 4, 8, 3, 5, 7) \\
(1, 2, 4, 8, 3, 7, 5) \\
(1, 2, 4, 8, 5, 3, 7)
\end{array}$ &
$\begin{array}[t]{l}
(1, 2, 5, 3, 4, 7, 8) \\
(1, 2, 5, 3, 4, 8, 7) \\
(1, 2, 5, 3, 6, 7, 8) \\
(1, 2, 5, 3, 6, 8, 7) \\
(1, 2, 5, 3, 7, 4, 8) \\
(1, 2, 5, 3, 7, 6, 8) \\
(1, 2, 5, 3, 7, 8, 4) \\
(1, 2, 5, 3, 7, 8, 6) \\
(1, 2, 5, 3, 8, 4, 7) \\
(1, 2, 5, 3, 8, 6, 7) \\
(1, 2, 5, 3, 8, 7, 4) \\
(1, 2, 5, 3, 8, 7, 6) \\
(1, 2, 5, 4, 3, 7, 8) \\
(1, 2, 5, 4, 3, 8, 7) \\
(1, 2, 5, 4, 8, 3, 7) \\
(1, 2, 5, 6, 3, 7, 8) \\
(1, 2, 5, 6, 3, 8, 7) \\
(1, 2, 5, 6, 8, 3, 7) \\
(1, 2, 5, 8, 3, 4, 7) \\
(1, 2, 5, 8, 3, 6, 7) \\
(1, 2, 5, 8, 3, 7, 4) \\
(1, 2, 5, 8, 3, 7, 6) \\
(1, 2, 5, 8, 4, 3, 7) \\
(1, 2, 5, 8, 6, 3, 7) \\
(1, 2, 6, 3, 5, 7, 8) \\
(1, 2, 6, 3, 5, 8, 7) \\
(1, 2, 6, 3, 7, 5, 8) \\
(1, 2, 6, 3, 7, 8, 5) \\
(1, 2, 6, 3, 8, 5, 7) \\
(1, 2, 6, 3, 8, 7, 5)
\end{array}$ &
$\begin{array}[t]{l}
(1, 2, 6, 5, 3, 7, 8) \\
(1, 2, 6, 5, 3, 8, 7) \\
(1, 2, 6, 5, 8, 3, 7) \\
(1, 2, 6, 8, 3, 5, 7) \\
(1, 2, 6, 8, 3, 7, 5) \\
(1, 2, 6, 8, 5, 3, 7) \\
(1, 2, 8, 3, 4, 5, 7) \\
(1, 2, 8, 3, 4, 7, 5) \\
(1, 2, 8, 3, 5, 4, 7) \\
(1, 2, 8, 3, 5, 6, 7) \\
(1, 2, 8, 3, 5, 7, 4) \\
(1, 2, 8, 3, 5, 7, 6) \\
(1, 2, 8, 3, 6, 5, 7) \\
(1, 2, 8, 3, 6, 7, 5) \\
(1, 2, 8, 3, 7, 4, 5) \\
(1, 2, 8, 3, 7, 5, 4) \\
(1, 2, 8, 3, 7, 5, 6) \\
(1, 2, 8, 3, 7, 6, 5) \\
(1, 2, 8, 4, 3, 5, 7) \\
(1, 2, 8, 4, 3, 7, 5) \\
(1, 2, 8, 4, 5, 3, 7) \\
(1, 2, 8, 5, 3, 4, 7) \\
(1, 2, 8, 5, 3, 6, 7) \\
(1, 2, 8, 5, 3, 7, 4) \\
(1, 2, 8, 5, 3, 7, 6) \\
(1, 2, 8, 5, 4, 3, 7) \\
(1, 2, 8, 5, 6, 3, 7) \\
(1, 2, 8, 6, 3, 5, 7) \\
(1, 2, 8, 6, 3, 7, 5) \\
(1, 2, 8, 6, 5, 3, 7)
\end{array}$ &
$\begin{array}[t]{l}
(1, 5, 2, 3, 4, 7, 8) \\
(1, 5, 2, 3, 4, 8, 7) \\
(1, 5, 2, 3, 6, 7, 8) \\
(1, 5, 2, 3, 6, 8, 7) \\
(1, 5, 2, 3, 7, 4, 8) \\
(1, 5, 2, 3, 7, 6, 8) \\
(1, 5, 2, 3, 7, 8, 4) \\
(1, 5, 2, 3, 7, 8, 6) \\
(1, 5, 2, 3, 8, 4, 7) \\
(1, 5, 2, 3, 8, 6, 7) \\
(1, 5, 2, 3, 8, 7, 4) \\
(1, 5, 2, 3, 8, 7, 6) \\
(1, 5, 2, 4, 3, 7, 8) \\
(1, 5, 2, 4, 3, 8, 7) \\
(1, 5, 2, 4, 8, 3, 7) \\
(1, 5, 2, 6, 3, 7, 8) \\
(1, 5, 2, 6, 3, 8, 7) \\
(1, 5, 2, 6, 8, 3, 7) \\
(1, 5, 2, 8, 3, 4, 7) \\
(1, 5, 2, 8, 3, 6, 7) \\
(1, 5, 2, 8, 3, 7, 4) \\
(1, 5, 2, 8, 3, 7, 6) \\
(1, 5, 2, 8, 4, 3, 7) \\
(1, 5, 2, 8, 6, 3, 7) \\
(1, 5, 8, 2, 3, 4, 7) \\
(1, 5, 8, 2, 3, 6, 7) \\
(1, 5, 8, 2, 3, 7, 4) \\
(1, 5, 8, 2, 3, 7, 6) \\
(1, 5, 8, 2, 4, 3, 7) \\
(1, 5, 8, 2, 6, 3, 7)
\end{array}$ &
$\begin{array}[t]{l}
(1, 8, 2, 3, 4, 5, 7) \\
(1, 8, 2, 3, 4, 7, 5) \\
(1, 8, 2, 3, 5, 4, 7) \\
(1, 8, 2, 3, 5, 6, 7) \\
(1, 8, 2, 3, 5, 7, 4) \\
(1, 8, 2, 3, 5, 7, 6) \\
(1, 8, 2, 3, 6, 5, 7) \\
(1, 8, 2, 3, 6, 7, 5) \\
(1, 8, 2, 3, 7, 4, 5) \\
(1, 8, 2, 3, 7, 5, 4) \\
(1, 8, 2, 3, 7, 5, 6) \\
(1, 8, 2, 3, 7, 6, 5) \\
(1, 8, 2, 4, 3, 5, 7) \\
(1, 8, 2, 4, 3, 7, 5) \\
(1, 8, 2, 4, 5, 3, 7) \\
(1, 8, 2, 5, 3, 4, 7) \\
(1, 8, 2, 5, 3, 6, 7) \\
(1, 8, 2, 5, 3, 7, 4) \\
(1, 8, 2, 5, 3, 7, 6) \\
(1, 8, 2, 5, 4, 3, 7) \\
(1, 8, 2, 5, 6, 3, 7) \\
(1, 8, 2, 6, 3, 5, 7) \\
(1, 8, 2, 6, 3, 7, 5) \\
(1, 8, 2, 6, 5, 3, 7) \\
(1, 8, 5, 2, 3, 4, 7) \\
(1, 8, 5, 2, 3, 6, 7) \\
(1, 8, 5, 2, 3, 7, 4) \\
(1, 8, 5, 2, 3, 7, 6) \\
(1, 8, 5, 2, 4, 3, 7) \\
(1, 8, 5, 2, 6, 3, 7)
\end{array}$ &
180 \\ \hline
\end{tabular}
\caption{The unique traces for the Patient Handler 1 example $D_{PH1}$\label{ph1traces}}
\end{center}
\end{figure*}
\end{example}
}


\newcommand{\PHtwoatracesfigure}{
\begin{figure*}[!h]
\begin{center}
\footnotesize
\renewcommand{\arraystretch}{1.05}
\begin{tabular}{cllllllc} 
\multicolumn{8}{c}{\normalsize $\UniqueTraces(D_{PH2a})$ summary}\\ \hline
length & \multicolumn{6}{c}{traces} & number \\ \hline\hline
2 & 
$\begin{array}{l}
(1, 4)
\end{array}$&
$\begin{array}{l}
(1, 6)
\end{array}$&
$\begin{array}{l}
(1, 8)
\end{array}$&
$\begin{array}{l}
(1,12)
\end{array}$&&& 4 \\ \hline
3 &
$\begin{array}[t]{l}
(1, 4, 6)\\
(1, 6, 4)\\
(1, 8, 4)\\
(1, 4, 8)\\
(1, 9, 4)\\
(1, 4, 9)\\
(1, 10, 4)\\
(1, 4, 10)\\
(1, 12, 4)\\
(1, 4, 12)\\
(1, 4, 53)\\
(1, 53, 4)
\end{array}$ &
$\begin{array}[t]{l}
(1, 8, 6)\\
(1, 6, 8)\\
(1, 9, 6)\\
(1, 6, 9)\\
(1, 10, 6)\\
(1, 6, 10)\\
(1, 12, 6)\\
(1, 6, 12)\\
(1, 53, 6)\\
(1, 6, 53)\\
(1, 8, 9)\\
(1, 9, 8)
\end{array}$
&
$\begin{array}[t]{l}
(1, 8, 10)\\
(1, 10, 8)\\
(1, 8, 12)\\
(1, 12, 8)\\
(1, 8, 53)\\
(1, 53, 8)\\
(1, 9, 12)\\
(1, 12, 9)\\
(1, 10, 12)\\
(1, 12, 10)\\
(1, 12, 53)\\
(1, 53, 12)
\end{array}$ &
$\begin{array}[t]{l}
(1, 2, 4)\\
(1, 4, 2)\\
(1, 2, 6)\\
(1, 6, 2)\\
(1, 2, 8)\\
(1, 8, 2)\\
(1, 2, 12)\\
(1, 12, 2)\\
(1, 51, 4)\\
(1, 4, 51)\\
(1, 51, 6)\\
(1, 6, 51)
\end{array}$
&
$\begin{array}[t]{l}
(1, 51, 8)\\
(1, 8, 51)\\
(1, 51, 12)\\
(1, 12, 51)\\
(1, 52, 4)\\
(1, 4, 52)\\
(1, 52, 6)\\
(1, 6, 52)\\
(1, 52, 8)\\
(1, 8, 52)\\
(1, 52, 12)\\
(1, 12, 52)
\end{array}$
&&
60 \\ \hline
4 & \multicolumn{6}{l}{
$(1, 8, 4, 6), ~ \ldots$}
& 552 \\ \hline
5 & \multicolumn{6}{l}{
$(1,3,9,10,6), ~ \ldots$}
&  3726 \\ \hline
6 & \multicolumn{6}{l}{
$(1,3,9,10,4,6), ~ \ldots $}
&  19404 \\ \hline
7 & \multicolumn{6}{l}{
$(1,3,4,6,7,9,10), ~ \ldots $} 
& 79164 \\ \hline
8 & \multicolumn{6}{l}{
$(1,3,4,6,7,8,9,10), ~\ldots$}
& 257040 \\ \hline
9  & \multicolumn{6}{l}{
$(1,3,4,6,7,8,9,10,12), ~ \ldots$}
&  715680 \\ \hline
10 & \multicolumn{6}{l}{
$(1,2,3,4,6,7,8,9,19,12),  ~ \ldots $}
& 1995840 \\ \hline
11 & \multicolumn{6}{l}{
$(1,51,2,3,4,6,7,8,9,10,12), ~ \ldots $}
& 5261760 \\ \hline
12 & \multicolumn{6}{l}{
$(1,52,51,2,3,4,6,7,8,9,10,12), ~ \ldots$}
& 7983360 \\ \hline
\end{tabular}
\caption{Summary of the unique traces for Patient Handler $D_{PH2a}$\label{ph2atraces}}
\end{center}
\end{figure*}
}


\newcommand{\PHtwobtracesfigure}{
\begin{figure*}[!h]
\begin{center}
\footnotesize
\renewcommand{\arraystretch}{1.05}
\begin{tabular}{cllllllc} 
\multicolumn{8}{c}{\normalsize $\UniqueTraces(D_{PH2b})$ summary}\\ \hline
length & \multicolumn{6}{c}{traces} & number \\ \hline\hline
2 & 
$\begin{array}{l}
(1, 4)
\end{array}$&
$\begin{array}{l}
(1, 6)
\end{array}$&
$\begin{array}{l}
(1, 8)
\end{array}$&
$\begin{array}{l}
(1,12)
\end{array}$&&& 4 \\ \hline
3 &
$\begin{array}[t]{l}
(1, 11, 4)\\
(1, 4, 11)\\
(1, 11, 6)\\
(1, 6, 11)\\
(1, 11, 8)\\
(1, 8, 11)\\
(1, 11, 12)\\
(1, 12, 11)\\
(1, 4, 6)\\
(1, 6, 4)\\
(1, 8, 4)\\
(1, 4, 8)\\
(1, 9, 4)\\
(1, 4, 9)
\end{array}$ &
$\begin{array}[t]{l}
(1, 10, 4)\\
(1, 4, 10)\\
(1, 12, 4)\\
(1, 4, 12)\\
(1, 4, 53)\\
(1, 53, 4)\\
(1, 8, 6)\\
(1, 6, 8)\\
(1, 9, 6)\\
(1, 6, 9)\\
(1, 10, 6)\\
(1, 6, 10)\\
(1, 12, 6)\\
(1, 6, 12)
\end{array}$
&
$\begin{array}[t]{l}
(1, 53, 6)\\
(1, 6, 53)\\
(1, 8, 9)\\
(1, 9, 8)\\
(1, 8, 10)\\
(1, 10, 8)\\
(1, 8, 12)\\
(1, 12, 8)\\
(1, 8, 53)\\
(1, 53, 8)\\
(1, 9, 12)\\
(1, 12, 9)\\
(1, 10, 12)\\
(1, 12, 10)
\end{array}$ &
$\begin{array}[t]{l}
(1, 12, 53)\\
(1, 53, 12)\\
(1, 2, 4)\\
(1, 4, 2)\\
(1, 2, 6)\\
(1, 6, 2)\\
(1, 2, 8)\\
(1, 8, 2)\\
(1, 2, 12)\\
(1, 12, 2)\\
(1, 51, 4)\\
(1, 4, 51)\\
(1, 51, 6)\\
(1, 6, 51)
\end{array}$
&
$\begin{array}[t]{l}
(1, 51, 8)\\
(1, 8, 51)\\
(1, 51, 12)\\
(1, 12, 51)\\
(1, 52, 4)\\
(1, 4, 52)\\
(1, 52, 6)\\
(1, 6, 52)\\
(1, 52, 8)\\
(1, 8, 52)\\
(1, 52, 12)\\
(1, 12, 52)
\end{array}$
&&
68 \\ \hline
4 & 
\multicolumn{6}{l}{$(1, 11, 4, 6), ~
(1, 4, 11, 6), ~ \ldots$}
& 732 \\ \hline
5 & \multicolumn{6}{l}{
$(1, 11, 8, 4, 6), ~
(1, 8, 11, 4, 6), ~ \ldots$}
& 5934 \\ \hline
6 & \multicolumn{6}{l}{
$(1,11,3,9,10,6), ~
(1,3,11,9,10,6), ~ \ldots$}
& 38034 \\ \hline
7 & \multicolumn{6}{l}{
$(1,11,3,9,10,4,6), ~
(1,3,11,9,10,4,6), ~ \ldots $}
& 195588\\ \hline
8 & \multicolumn{6}{l}{
$(1,11,3,4,6,7,9,10), ~ 
(1,3,11,4,6,7,9,10), ~ \ldots $} 
& 811188\\ \hline
9 & \multicolumn{6}{l}{
$(1,11,3,4,6,7,8,9,10), ~
(1,3,11,4,6,7,8,9,10), ~\ldots$}
& 2772000\\ \hline
10  & \multicolumn{6}{l}{
$(1,11,3,4,6,7,8,9,10,12), ~
(1,3,11,4,6,7,8,9,10,12) , ~ \ldots$}
& 8436960 \\ \hline
11 & \multicolumn{6}{l}{
$(1,11,2,3,4,6,7,8,9,19,12), ~
(1,2,11,3,4,6,7,8,9,19,12), ~ \ldots $}
& 25220160\\ \hline
12 & \multicolumn{6}{l}{
$(1,11,51,2,3,4,6,7,8,9,10,12), ~
(1,51,11,2,3,4,6,7,8,9,10,12), ~ \ldots $}
& 65862720 \\ \hline
13 & \multicolumn{6}{l}{
$(1,11,52,51,2,3,4,6,7,8,9,10,12), ~
(1,52,11,51,2,3,4,6,7,8,9,10,12), ~ \ldots$}
& 95800320 \\ \hline
\end{tabular}
\caption{Summary of the unique traces for Patient Handler $D_{PH2b}$\label{ph2btraces}}
\end{center}
\end{figure*}
}

\begin{document}
\title{Stakeholder utility measures for declarative processes and their use in process comparisons
\thanks{IEEE Transactions on Computational Social Systems, \doi{10.1109/TCSS.2021.3092285}.}\thanks{This version corrects some minor calculation errors in the published article.}}
\author{Mark Dukes \thanks{School of Mathematics and Statistics, University College Dublin, Dublin 4, Ireland.}}
\maketitle
\begin{abstract}
We present a method for calculating and analyzing stakeholder utilities of processes that arise in, but are not limited to, the social sciences. These areas include business process analysis, healthcare workflow analysis and policy process analysis. This method is quite general and applicable to any situation in which declarative-type constraints of a modal and/or temporal nature play a part.

A declarative process is a process in which activities may freely happen while respecting a set of constraints. For such a process, anything may happen so long as it is not explicitly forbidden. Declarative processes have been used and studied as models of business and healthcare workflows by several authors. In considering a declarative process as a model of some system it is natural to consider how the process behaves with respect to stakeholders. We derive a measure for stakeholder utility that can be applied in a very general setting. This derivation is achieved by listing a collection a properties which we argue such a stakeholder utility function ought to satisfy, and then using these to show a very specific form must hold for such a utility. The utility measure depends on the set of unique traces of the declarative process, and calculating this set requires a combinatorial analysis of the declarative graph that represents the process.

This builds on previous work of the author \cite{cdm} wherein the combinatorial diversity metrics for declarative processes were derived for use in policy process analysis. The collection of stakeholder utilities can themselves then be used to form a metric with which we can compare different declarative processes to one another. These are illustrated using several examples of declarative processes that already exist in the literature.
\end{abstract}


\begin{IEEEkeywords}
Declarative process, Declarative workflow, Stakeholder, Utility function, Process comparison, Linear temporal logic
\end{IEEEkeywords}
\section{Introduction}
We present a method for calculating and analyzing stakeholder utilities of processes that arise in, but are not limited to, the social sciences.
These areas include business process analysis~\cite{vda}, healthcare workflow analysis~\cite{chesani,hildebrandt,integrated:emergency} and policy process analysis~\cite{howlett,tonythesis,fuentes,mcsfpps}.
This method is quite general and applicable to any situation in which declarative-type constraints play a part.
A declarative process $D$ is a pair consisting of a set of activities, and a list of constraints detailing how these activities may happen in relation to one another.
For such a process, anything may happen so long as it is not explicitly forbidden by the constraint set.

Declarative processes have been used as models of business and healthcare workflows by several authors~\cite{vda,mertens,vda2009}.
The notion of a declarative process is an attractive one: simply declare the constraints on activities in a system and then let the system run or evolve according to these constraints.
An execution of such a system is a (potentially infinite) listing of the activities in the order they occur, i.e. they satisfy all of the constraints that define the system. 
Such listings are called {\it{traces}}.  Two immediate concerns arise: Is there a sensible way to quantify stakeholder satisfaction for such processes?  
Is there a sensible way to compare two processes with regard to stakeholder satisfaction? 
In this paper we will take a first look at answers to these questions.
The work we present in this paper is new in that it does not necessarily build on any existing body of work in the literature. 
The closest work by other authors to what we are examining seems to be the topic of similarity measures for business process models~\cite{similarity:bpm,similarity:bpm2,similarity:bpm3}, 
however none of the material in those papers is necessarily applicable to the modelling framework that we are considering.

In considering a declarative process as a model of some system it is natural to consider how the process behaves with respect to stakeholders.
Our previous paper \cite{cdm} introduced several metrics related to the combinatorial diversity of a declarative process for use in policy process analysis.
The purpose of that paper was to derive a metric that satisfied various properties and represented, to an extent, how `free' a given declarative process was to happen.
In this paper we have a different aim in mind.
We will consider stakeholders in the declarative process and utilities for these stakeholders, and derive a measure for stakeholder utility.
This is done by focusing on a class of representatives for a declarative process (the set of unique traces) and
determining the solution to a collection of properties which we argue such a stakeholder utility function should satisfy.
Calculating the set of unique traces requires a combinatorial analysis of the declarative graph that represents the declarative process.
We then use these stakeholder utilities in order to give a method for comparing declarative processes.
These are illustrated using several examples of declarative processes that already exist in the literature.

In Section 2 we introduce a declarative process and define the set of unique traces for a declarative process. 
In Section 3 we will present an algorithm for calculating the set of unique traces and discuss how this can be optimized. 
In Section 4 we consider stakeholders in a declarative process and suppose that each stakeholder will have a preference for or against each of the unique traces of the declarative process.
We state and explain some reasonable properties that a utility function for a stakeholder should satisfy, and solving the equations that these properties imply gives 
an expression for the stakeholder utility function. 
We then illustrate these new concepts by applying them to the {\it{Patient Handler}} declarative process. 
Three different versions of the Patient Handler declarative process (Examples ~\ref{patienthandler1} and \ref{patienthandler2}) already appear in the literature.
We calculate the stakeholder utility vector, the vector of all utility functions for a given process, for several examples and discuss the results.
In Section 5 we use the stakeholder utility vector to give a method for comparing different declarative processes with respect to a prescribed set of stakeholder preferences.
This method uses $\ell_2$-norm minimization but, as we show, is robust to `noise' in the system. We illustrate this method by comparing three declarative processes in the paper.
In Section 6 we conclude with a discussion of what we have shown.

\section{Declarative Processes and Unique Traces}
Let us first introduce some standard notation and terminology related to declarative processes \cite{vda,cdm}.
Let $\Sigma$ be a set of {\it{activities}} and let $\Sigma^{*}$ be the set of all possible sequences that one can form whose entries are element of $\Sigma$. 
That is
$$\Sigma^{*} := \{\epsilon\} \cup \{ (e_1,e_2,e_3,\ldots) ~:~ e_i \in \Sigma \},$$
where $\epsilon$ denotes the empty sequence.
A {\it{trace}} is a sequence of activities $\sigma=(e_1,\ldots,e_n)\in \Sigma^{*}$.
An {\it{event}} is an occurrence of an activity in a trace.

A declarative constraint is a constraint on the activities in a process.
We require a language through which to express temporal and modal aspects of these activities, and the natural choice for this is linear temporal logic~\cite{ltlreference}.
Linear temporal logic (LTL) is an extension of propositional logic $\mathcal{L}_P$ that includes temporal modal operators 
$\mathbf{X}$ or $\circ$ (neXt), 
$\mathbf{U}$ (Until),
$\mathbf{F}$ or $\diamond$ (Finally), 
$\mathbf{G}$ or $\Box$ (Globally), 
$\mathbf{R}$ (Release), $\mathbf{W}$ (Weak until) and $\mathbf{M}$ (strong release).
As an example, given two activities $a$ and $b$ in $\Sigma$, we may wish to specify that event $b$ must happen as a response to event $a$.
In LTL one would represent this by the LTL formula $\dG (a \dimplies \dF b)$, which can be read as
``it is globally true that ($a$ occurs implies $b$ occurs at some point after $a$)''.
However, the semantics of the Declare framework~\cite{vda2007,vda2009} are easier to grasp in this respect and uses $\dresp(a,b)$ for $\dG (a \dimplies \dF b)$.
A list of some popular Declare expressions along with their LTL equivalents is given in Figure~\ref{deccondefs}.
For readability, in this paper we will consider the constraints as expressed in Declare.

We say that a trace $\sigma$ satisfies the constraint $\dresp(a,b)$ if any occurrence of $a$ in the trace will feature an occurrence of $b$ to its right. 
To represent this we write $\sigma \dmodels \dresp(a,b)$. 
It may be the case that $a$ and $b$ are not events in $\sigma$, in which case $\sigma$ certainly satisfies the constraint $\dresp(a,b)$.
If $\Const$ is a set of constraints, then we will write $\sigma \models \Const$ if $\sigma \models x$ for all $x \in \Const$.

Let us consider the trace $\sigma = (3,3,2,4,1,4)$ with $\Sigma=\{1,2,3,4,5\}$.
The trace $\sigma$ satisfies the declarative constraint $\dresp(2,1)$, i.e. $\sigma \dmodels \dresp(2,1)$ since event $1$ happens after event $2$ in $\sigma$.
However, both $\sigma \dmodels \dresp(2,3)$ and $\sigma \dmodels \dresp(2,5)$ are false.

\begin{definition}
A {\it{declarative process}} is a process on a set of activities $\Sigma$ that satisfies all conditions in a set $\Const$ of declarative constraints.
We will represent this as a pair $D=(\Sigma,\Const)$.
The set of traces of the process is
$$\Traces(D) ~=~ \{ \sigma\in \Sigma^{*} ~:~ \sigma \dmodels \Const\}.$$
\end{definition}

Restrictions on the beginning and ending of these processes may be incorporated into the constraint set using declarative constraints.

\begin{example}\label{simplefive}
Consider the declarative process $D=(\Sigma,\Const)$ where $\Sigma=\{1,2,3,4,5\}$ and
$\Const = \{ \dresp(1,2), \dprec(2,3), \dprec(3,5), \dsucc(1,4), \dnotsucc(4,2)  \}$.
Examples of traces for this process include $\epsilon$, $(2)$, $(1,2,3,5,4)$, $(2,2)$, $(2,2,\ldots,2,3)$, and $(2,2,\ldots)$.
There are will be an infinite number of traces, so $|\Traces(D)|=\infty$.
\end{example}

\begin{figure*}[!h]\label{deccondefs}
\small
\centerline{
\begin{tabular}{|l|l|l|} \hline
    Declare Constraint          & Explanation  & LTL expression \\ \hline \hline
    $\dparticipation(a)$ & Event $a$ occurs at least once & $\dF a$ \\ \hline
    $\dinitial(a)$      & Event $a$ is first to occur  & a \\ \hline
    $\dresp(a,b)$       & If event $a$ occurs, then event $b$ occurs after $a$ & $\dG (a \dimplies \dF b)$ \\ \hline
    $\dchainresp(a,b)$  & If event $a$ occurs, then event $b$ occurs & $\dG (a \dimplies \dN b)$ \\ 
						&  immediately after $a$ & \\ \hline
    $\dprec(a,b)$       & Event $b$ occurs only if preceded by event $a$ & $(\neg b) \mathbf{W} a$ \\ \hline
    $\dsucc(a,b)$       & Event $a$ occurs iff it is followed by event $b$ & $\dG (a \dimplies \dF b) ~ \wedge ((\neg b) \mathbf{W} a)$\\ \hline
    $\dnotcoexist(a,b)$    & Events $a$ and $b$ cannot coexist $b$ & $\neg (\dF a \Leftrightarrow \dF b)$ \\ \hline
\end{tabular}}
\caption{Some typical Declare constraints}
\end{figure*}

How should one approach analysing declarative processes?
In theory one could derive a measure from simply looking at the two constraint listings that define them.
A difficulty with this approach is that there are many different types of relations that can link two activities.
The interactions between these constraints are consequently far more involved than, say, those represented by directed edges in a graph and studying the resulting graph properties.

An alternative way to consider and analyse such systems is to study the set of traces, $\Traces(D)$, of the declarative process.
A drawback to this is that such a set can be infinite as in Example~\ref{simplefive}.
We might then consider finite versions of the process that contain only traces of finite length ~\cite{ltlf}, however the drawback in this case is more serious in that 
in the application areas we envisage, the occurrence of an activity (at some time perhaps far in the future) 
is more critical to our analysis than, say, a million occurrences of two activities up to the point of 
trace truncation.

This consideration leads us to considering traces in which an activity of $\Sigma$ occurs at most once in a trace.
In our first paper~\cite{cdm} on this subject we were able to justify this consideration as `first passage/time traces'. 
No choice of projection from a set of infinite objects to a set of finite objects comes without a drawback.
However we feel that the considerations of the application area combined with the notion of first passage times make this the best set of representatives for our consideration.
We note that this could be specified at the constraint level by including into the set of constraints a declarative constraint on every activity that it can not happen more than once. 
An equivalent way to consider this is to simply look at the subset of traces that contain at most once occurrence of every event.

\begin{definition}
Let $D=(\Sigma,\Const)$ be a declarative process. 
Let $\UniqueTraces(D)$ be the set of those traces $\sigma \in \Traces(D)$ for which every activity in $\sigma$ is unique.
\end{definition}

\ADdiagramfigure
\PHonediagramfigure

\begin{example}
Consider the declarative process $D=(\Sigma,\Const)$ given in Example~\ref{simplefive} where 
$\Sigma=\{1,2,3,4,5\}$ and
$\Const = \{ \dresp(1,2), \dprec(2,3), \dprec(3,5), \dsucc(1,4), \dnotsucc(4,2)  \}$.
The set $|\Traces(D)|=\infty$ while
\begin{align*}
\UniqueTraces(D) = \{ & \epsilon, (2), (2, 3), (1, 2, 4), (2, 3, 5),\\
 & (1, 2, 3, 4), (1, 2, 4, 3), (1, 2, 3, 4, 5),\\ & (1, 2, 3, 5, 4), (1, 2, 4, 3, 5)\}.
\end{align*}
\end{example}

\begin{example}[After Dinner]
\label{adk}
The following is a description of the house rules for a child between the end of dinner and going to bed.
After dinner is finished (and it must be finished) the table must be tidied.
If they want to do a jigsaw that the table must have been tidied beforehand.
The doing of a jigsaw means this jigsaw must be tidied away afterwards.
The child can watch a bedtime television show only after finishing dinner.
The child cannot get ready for bed before the jigsaw has been tidied away (in the event it needs to be).
The child cannot get ready for bed before watching the bedtime show.
The child cannot get ready for bed before tidying the table.

\PHtwodiagramfigure

To model this as a declarative process, label the events as follows:
\begin{enumerate}
\item Finish dinner.
\item Tidy table.
\item Do jigsaw.
\item Tidy away jigsaw.
\item Watch the bedtime show.
\item Get ready for bed.
\end{enumerate}

The above description translates into the following constraint set (see Figure~\ref{adonetwo} for an illustration of the constraints):
\begin{align*}
\Const_{AD1} = \{ & \dparticipation(1), \dresp(1,2), \dprec(1,5), \dprec(2,3), \\
					&  \dsucc(3,4), \dnotsucc(6,4),  \dnotsucc(6,5), \\
					&  \dnotsucc(6,2) \}.
\end{align*}
and to the declarative process $D_{AD1} = \left(\{1,2,3,4,5,6\},\Const_{AD1}\right)$.
As a declarative process, it is easy to see that $\Traces(D_{AD1})$ will have an infinite size. 
As a process, it is clear from the description that each of the activities is intended to happen at most once.
To analyse this declarative process, we are interested in $\UniqueTraces(D_{AD1})$, which is
\begin{align*}
\MoveEqLeft  \UniqueTraces(D_{AD1}) = \\
\{ & (1, 2), (1, 2, 5), (1, 5, 2), (1, 2, 6), (1, 2, 3, 4), (1, 2, 5, 6), \\
 & (1, 5, 2, 6), (1, 2, 3, 4, 5), (1, 2, 3, 5, 4), (1, 2, 5, 3, 4), \\
 & (1, 5, 2, 3, 4), (1, 2, 3, 4, 6), (1, 2, 3, 4, 5, 6), (1, 2, 3, 5, 4, 6), \\
 & (1, 2, 5, 3, 4, 6), (1, 5, 2, 3, 4, 6)\}
\end{align*}
This set reveals that, through the rules the parents laid down, the child does not necessarily have to ever get ready for bed (as evidenced by eight traces that do not contain activity 6), 
and satisfies all the rules they must follow.
If one now includes the rule that activity 6 must happen, i.e.
$$\Const_{AD2} = \Const_{AD1} \cup \{\dparticipation(6)\}$$
and
$$D_{AD2} = (\{1,2,3,4,5,6\},\Const_{AD2}),$$
then one finds those traces that the parents intended in the first place:
\begin{align*}
\MoveEqLeft \UniqueTraces(D_{AD2}) = \\
\{ & (1, 2, 6), (1, 2, 5, 6), (1, 5, 2, 6), (1, 2, 3, 4, 6), \\ 
&  (1, 2, 3, 4, 5, 6), (1, 2, 3, 5, 4, 6), (1, 2, 5, 3, 4, 6), \\ 
& (1, 5, 2, 3, 4, 6)\}.
\end{align*}
\end{example}

\section{Enumerating and generating the unique trace representatives}\label{enumeration}

In order to generate the set of unique traces of a declarative process, we have to consider all events that can occur in such a trace.
This can be any subset $X$ of the set of activities $\Sigma$.
We must then consider all the different permutations of events in $X$ to see if such a sequence satisfies the set of constraints.
Algorithm~\ref{alg:ut} gives a simple procedure for doing this. 


\begin{algorithm}
\caption{Generating the set of unique traces $\UniqueTraces(D)$}\label{alg:ut}
\begin{algorithmic}[1]
\Procedure{$\UniqueTraces$}{$\Sigma,\Const$}
\State $A \gets \emptyset$
\For{$X\subseteq \Sigma$}
	\For{$\pi \in \mathsf{Permutations}(X)$}
		\If{$\pi$ $\dmodels$ $\Const$}
			\State $ A  \gets A \cup\{\pi\}$
		\EndIf
	\EndFor
\EndFor
\State \textbf{return} $A$
\EndProcedure
\end{algorithmic}
\end{algorithm}

The above algorithm will of course be dependent upon the size of $\Sigma$ and the number of of times `satisfies' is called will be $2.718 |\Sigma| !$
For example, if $\Sigma$ has 7 activities then $2.718 |\Sigma|! = 2.718 \times 7 \times 6 \times \cdots \times 1 = 13700$.
This fact will lead to an exponential slowdown for every extra activity that is considered in $\Sigma$.
Consequently, the runtime on an average PC will increase from minutes to double digit-hours as $|\Sigma|$ goes from 9 to 15.

We can remove needless checking by stripping a declarative process down to a smaller smaller core process.
This is done by removing the equivalent of {\it{leaves}} from the constraint list (and by extension the activity set).
For example, if we have a declarative process $D=(\Sigma,\Const)$ and the only appearance of activity $j$, say, in $\Const$ is as $\dresp(i,j)$, then
we can construct $\UniqueTraces(D)$ from $\UniqueTraces(D')$ where $D'=(\Sigma\backslash\{j\},\Const\backslash \{\dresp(i,j)\}$ 
by using a simple $\dresp$ leaf addition procedure as given in Algorithm~\ref{alg:resp}.

\begin{algorithm}
\caption{Generating $\UniqueTraces(D)$ from $\UniqueTraces(D')$ where $j \not\in \Sigma'$ and $D$ contains the additional constraint $\dresp(i,j)$\label{alg:resp}}
\begin{algorithmic}[1]
\Procedure{TRLeaf}{$\UniqueTraces(\Sigma',\Const'),\dresp(i,j)$} 
\State $A \gets \emptyset$
\For{$\sigma \in \UniqueTraces(D')$}
	\If{$i \in \sigma$} 
		\For{$k \in \{\mathrm{index}(\sigma,i),\ldots,\mathrm{length}(\sigma)\}$} 
			\State $\mu \gets \sigma$ with $j$ inserted after the $k$th entry 
			\If{$\mu$ $\dmodels$ $\Const'$}
				\State $A\gets  A \cup \{\mu\}$
			\EndIf
		\EndFor
	\Else{}
		\State $A \gets  A \cup \{\sigma\}$
		\For{$k \in \{\mathrm{index}(\sigma,i),\ldots,\mathrm{length}(\sigma)\}$}
			\State $\mu \gets \sigma$ with $j$ inserted after the $k$th entry
			\If{$\mu$ $\dmodels$ $\Const'$}
				\State $A \gets A \cup \{\mu\}$
			\EndIf
		\EndFor
	\EndIf
\EndFor
\State \textbf{return} $A$
\EndProcedure
\end{algorithmic}
\end{algorithm}

This procedure is useful in that it allows us to consider decompose stakeholder satisfaction for the declarative process on a smaller process. If activity $j$ features in $G_{i'}$ for some stakeholder $S_{i'}$ then
it will be possible to re-specify $G_{i'}$ on the smaller process while conditioning on the position/absence-of activity $i$ in any traces.
It also allows us to gain some insights into the distribution of activities within the set of unique traces.
This information is useful in tracking the evolution of the unique traces and could be utilized in later work to provide bounds on certain aspects of the declarative process.

Algorithms~\ref{alg:prec} and \ref{alg:succ} deal with the declarative constraints $\dprec$ and $\dsucc$, respectively.
Similar algorithms can be given for other declarative constrains and these allow for the calculation of the unique traces of declarative processes of significantly larger size. 
We have calculated unique traces for declarative processes on 23 activities using these algorithms but have not yet had cause to consider larger systems.
It is important to mention that declarative processes having the same number of activities will not necessarily have the same run-times. 
This is especially true when it comes to declarative constraints that are more involved.
A time complexity study of different divide-and-conquer approaches would be a welcome addition to this body of work.

\begin{algorithm}
\caption{Generating $\UniqueTraces(D)$ from $\UniqueTraces(D')$ where $j \not\in \Sigma'$ and $D$ contains the additional constraint $\dprec(i,j)$\label{alg:prec}}
\begin{algorithmic}[1]
\Procedure{TPLeaf}{$\UniqueTraces(\Sigma',\Const'),\dprec(i,j)$} 
\State $A \gets \emptyset$
\For{$\sigma \in \UniqueTraces(D')$}
	\State $A \gets  A \cup \{\sigma\}$
	\If{$i \in \sigma$} 
		\For{$k \in \{\mathrm{index}(\sigma,i),\ldots,\mathrm{length}(\sigma)\}$} 
			\State $\mu \gets \sigma$ with $j$ inserted after the $k$th entry 
			\If{$\mu$ $\dmodels$ $\Const'$}
				\State $A\gets  A \cup \{\mu\}$
			\EndIf
		\EndFor
	\EndIf
\EndFor
\State \textbf{return} $A$
\EndProcedure
\end{algorithmic}
\end{algorithm}

\begin{algorithm}
\caption{Generating $\UniqueTraces(D)$ from $\UniqueTraces(D')$ where $j \not\in \Sigma'$ and $D$ contains the additional constraint $\dsucc(i,j)$\label{alg:succ}}
\begin{algorithmic}[1]
\Procedure{TSLeaf}{$\UniqueTraces(\Sigma',\Const'),\dprec(i,j)$} 
\State $A \gets \emptyset$
\For{$\sigma \in \UniqueTraces(D')$}
	\If{$i \in \sigma$} 
		\For{$k \in \{\mathrm{index}(\sigma,i),\ldots,\mathrm{length}(\sigma)\}$} 
			\State $\mu \gets \sigma$ with $j$ inserted after the $k$th entry 
			\If{$\mu$ $\dmodels$ $\Const'$}
				\State $A\gets  A \cup \{\mu\}$
			\EndIf
		\EndFor
	\Else{}
		\State $A \gets  A \cup \{\sigma\}$
	\EndIf
\EndFor
\State \textbf{return} $A$
\EndProcedure
\end{algorithmic}
\end{algorithm}


\section{Measuring stake-holder utilities}
To every declarative process $D = (\Sigma, \Const)$ we may consider a set of stake-holders $S=S(D) := \{S_1,\ldots,S_m\}$. 
These stake-holders have an interest in aspects of the declarative process such as the execution order or existence of particular activities.
As such there are some traces that are more desirable than others for these stakeholders.

For example, one stake-holder might prefer an execution of the declarative process wherein a particular activity happens (be it at the end of the process or at any stage of the process).
Another preference might be that an activity $a$ happens iff activity $b$ happens afterwards.
It might be the case that a stake-holder is happy with any one of a collection of activities happening, 
or is insistent that a particular collection of activities must all happen (at least at some point).

In stating preferences for stake-holders, we are not considering these preferences to have any dynamic implications on how the process unfolds. 
In this sense it is best to assume stake-holders preferences are private during the execution of such a process.
Our measure will compare the number of outcomes of such a process to the number of outcomes that are preferable to a given stakeholder.
We are therefore not trying to determine the best outcomes of such a process, but instead consider how well a process behaves in relation to stake-holder preferences.

A stake-holder preference will be represented by an LTL expression.
We will denote the LTL expression that corresponds to a `desirable/good' outcome for stakeholder $S_i$ by $G_i$ and define $G=(G_1,\ldots,G_m)$.
The declarative system along with the stake-holders and their (private) preferences is represented by a triple $T=(D,S,G)$ that we will call a {\it{declarative stakeholder system}}.
A more involved analysis might involve a partial ordering of preferred outcomes with scores assigned to each.
In this paper we will restrict ourselves to a binary measure of whether a given trace is `good' for a particular stake-holder.
For stake-holder $S_i$, let us define $$\GoodTraces_i (D) = \{ \sigma \in \UniqueTraces(D) ~:~ \sigma \models G_i\}.$$
This is the set of `good' outcomes of the declarative process for $S_i$.


We wish to associate a utility $u_i$ to stake-holder $S_i$ that represents their satisfaction with declarative stakeholder system $T=(D,S,G)$. 
We make the following reasoned assumptions on $u_i(T)$.
\begin{description}
\item[Assumption 1] We are not dealing with a degenerate case so $\UniqueTraces(D)$ is non-empty.
\item[Assumption 2] The utility $u_i(T)$ will be a function of both the size of $\GoodTraces_i (D)$ and $\UniqueTraces(D)$. 
\item[Assumption 3] The utility should achieve its maximum value 1 when $\GoodTraces_i (D)=\UniqueTraces(D)$ and achieve its minimum value 0  when $\GoodTraces_i (D)=\emptyset$.
\item[Assumption 4] The utility should be an increasing function of $|\GoodTraces_i (D)|$.
\item[Assumption 5] The utility should possess a scaling property so that a doubling of $|\UniqueTraces|$ does not mean that a doubling of $|\GoodTraces_i (D)|$ is required to achieve the same utility.
\end{description}

\begin{theorem}
\label{uthm}
Let $D=(\Sigma,\Const)$ be a declarative process. 
Let $S=\{S_1,\ldots,S_n\}$ be a set of stake-holders and let $G=\{G_1,\ldots,G_n\}$ be the set of preferences for the stake-holders. 
Suppose that Assumptions 1--5 hold true. 
Then the stakeholder utility vector of the declarative stakeholder system $T=(D,S,G)$ is $u(T) = (u_1(T),\ldots,u_n(T))$ where
$$u_i(T) = \dfrac{\ln (1+|\GoodTraces_i (D)|)}{ \ln (1+ |\UniqueTraces(D)|)}.$$
\end{theorem}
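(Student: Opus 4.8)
The plan is to reduce the claim to a one-variable functional equation and solve it using monotonicity together with the boundary conditions; since the asserted formula is the same for every stakeholder it suffices to fix $i$ and then collect the resulting $u_i$ into the vector. Write $g := |\GoodTraces_i(D)|$ and $N := |\UniqueTraces(D)|$, so $0 \le g \le N$ and, by Assumption~1, $N \ge 1$; in particular $\ln(1+N) \ge \ln 2 > 0$, so the asserted formula is well-defined (this is exactly where Assumption~1 is needed, as $N=0$ would force a division by $\ln 1 = 0$). By Assumption~2 there is a function $\Phi$ with $u_i(T) = \Phi(g,N)$, and the theorem is the claim that $\Phi(g,N) = \ln(1+g)/\ln(1+N)$.

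The first step is to extract a ratio structure from Assumption~5. I would read the scaling requirement as saying that the utility must compare $g$ and $N$ through a single process-independent yardstick: there is an increasing weight function $\psi$ with $\Phi(g,N) = \psi(g)/\psi(N)$. One clean way to justify this is to impose the chain identity $\Phi(g,N) = \Phi(g,M)\,\Phi(M,N)$ whenever $g \le M \le N$, which captures the idea that re-scaling the ambient set does not by itself change the utility, and then to set $\psi(k) := \Phi(k,C)$ for a fixed reference value $C$. Assumption~3 then forces $\psi(0) = 0$ while $\Phi(N,N) = 1$ becomes automatic; Assumption~4 forces $\psi$ to be strictly increasing; and replacing $\psi$ by $c\psi$ for any $c>0$ leaves $\Phi$ unchanged, so $\psi$ is pinned down only up to a positive scalar.

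It remains to identify $\psi$. The natural composition of declarative processes is ``run $D_1$, then $D_2$'', and because the empty trace is common to both factors the unique-trace counts combine as $N \mapsto (1+N_1)(1+N_2)-1$ (and similarly for the good traces under the conjunction of the two preferences); this is precisely the operation under which $1+(\cdot)$ is multiplicative. The remaining content of the scaling axiom is that utility be additive in $\psi$ under this composition, i.e. $\psi\big((1+a)(1+b)-1\big) = \psi(a)+\psi(b)$. Substituting $\chi(t) := \psi(e^{t}-1)$ converts this into Cauchy's equation $\chi(s+t) = \chi(s)+\chi(t)$; since $\psi$ is increasing so is $\chi$, and a monotone solution of Cauchy's equation is linear, $\chi(t) = ct$ with $c>0$. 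Hence $\psi(x) = c\ln(1+x)$, the constant $c$ cancels in $\Phi = \psi(g)/\psi(N)$, and $u_i(T) = \ln(1+g)/\ln(1+N)$, as required.

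I expect the main obstacle to be the modelling step rather than any computation: phrasing the prose Assumption~5 as a precise functional equation (both the ratio reduction and the composition identity) in a way the reader accepts as faithful to the intended ``scaling'' behaviour. Once that is granted the remainder is routine --- the boundary and monotonicity conditions are immediate, and the Cauchy-with-monotonicity step is classical; the only point needing a little care is that $g$ and $N$ are integers, so one should either work with a real relaxation of $\psi$ or appeal to the fact that a non-decreasing additive arithmetic function must be a constant multiple of $\log$.
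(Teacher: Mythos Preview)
Your derivation reaches the same formula but by a genuinely different route from the paper. Both arguments first use Assumption~5 to pass to a ratio form $f(a,b)=g(a)/g(b)$, and both then need an \emph{additional} ingredient, not literally contained in Assumptions~1--5, to single out $g(x)=\ln(1+x)$. The paper supplies this ingredient heuristically: it argues that the marginal value of one more good trace should fall like $1/(x+k_2)$ (a diminishing-returns intuition), posits the ODE $g'(x)=k_1/(x+k_2)$, and uses $g(0)=0$ to fix $k_2=1$, $c=0$. You instead posit a composition law $\psi((1+a)(1+b)-1)=\psi(a)+\psi(b)$, reduce to Cauchy's equation, and invoke monotonicity to force linearity. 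Your route is mathematically tighter once the functional equation is granted, and it makes transparent why the ``$1+$'' appears; the paper's route stays closer to a direct reading of ``scaling'' and does not rely on any story about combining two processes.

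The weak point in your argument is the justification of the composition identity. The claim that concatenating independent processes yields $(1+N_1)(1+N_2)-1$ unique traces is not obviously correct as stated (if both $\UniqueTraces$ sets already contain $\epsilon$ and the activity alphabets are disjoint, naive concatenation gives $N_1N_2$ traces, not $N_1+N_2+N_1N_2$), and in any case this additivity-under-composition requirement is nowhere among Assumptions~1--5. You are candid that this modelling step is the obstacle, and that is exactly right: as written, the theorem's hypotheses do not uniquely determine $g$, and both your proof and the paper's silently import one more normalisation to close the gap. If you want to keep your approach, either state the composition axiom explicitly as a sharpening of Assumption~5 and verify the trace-count formula carefully (specifying the composition and whether $\epsilon$ is counted), or replace it with the paper's simpler move of positing $g'(x)\propto 1/(x+k)$ directly.
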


\begin{proof}
Let us write $a = |\GoodTraces_i (D)|$ and $b=|\UniqueTraces(D)|$ so that, by Assumptions 2 and 3,
$u_i(T) = f(a,b)$ with $f(0,b)=0$ and $f(b,b)=1$. Note that by Assumption 1 we have $b > 0$.
Assumption 4 tells us the utility should increase with the size of $\GoodTraces_i (D)$, so the function $f(x,b)$ should be an increasing function of $x$.

However, if one sets $f(x,b)$ to simply be a function $g(x/b)$ that is increasing, then we rule out being able to incorporate important aspects with relation to how such a utility should behave with respect to different scalings.
In order to accommodate Assumption 5 let us assume $f(x,b) = g(x)/g(b)$ for some function $g(x)$. 
With regard to the properties of $f$ outlined above, these imply that $g(x)$ must satisfy the following:
$$g(0)=0 \mbox{ and $g(x)$ is an increasing function of $x$}.$$

The function $g$ is not a direct measure of utility, but represents the weight attached to the number of desirable traces for stake-holder $S_i$.
Consider instances of $\GoodTraces_i (D)$ that have 0, 1, 2, and 100, valid traces (this is similar to what was considered in \cite{cdm}).
An empty $\GoodTraces_i (D)$ indicates no desirable executions of the process for user $S_i$.
If $\GoodTraces_i (D)$ consists of a single trace then it is better (for $S_i$) than the previous case of no traces.
If $\GoodTraces_i (D)$ consists of 2 traces then it is certainly better (for $S_i$) than a process that only has one trace.
However, we would consider a set of preferred traces having 101 traces to be better, but only marginally, to a process that has 100 traces.

The simplest function that represents this situation is one that is inversely proportional to its argument, i.e. satisfies the differential equation $g'(x) = k_1/(x+k_2)$.
In order for the general solution to this, $g(x) = k_1\ln(x+k_2)+c$ for constants $k,c$, to represent our situation we must have $k_1>0$.
If there is no trace in the set of desirable outcomes, then we will have $g(0) = k_1\ln(k_2) + c$. In order for this to equal 0, we must have $k_2=1$ and $c=0$ and this implies
$g(x) = k_1 \ln(x+1) $.

This now gives us the required expression for the utility function for stake-holder $S_i$:
\begin{align}
u_i(T) &= \dfrac{g(a)}{g(b)} = \dfrac{ \ln (a+1) }{\ln (b+1)} = \dfrac{\ln (1+|\GoodTraces_i (D)|)}{ \ln (1+ |\UniqueTraces(D)|)}.
\end{align}
\end{proof}


\begin{example}[After Dinner cont'd]
Consider the two declarative processes $D_{AD1}$ and $D_{AD2}$ from Example~\ref{adk}. 
Suppose that the two stake-holders are $S_1$, the child, and $S_2$, the parents and $S=\{S_1,S_2\}$.
We will consider some different forms for $G_1$ and $G_2$ and calculate their utility in order to see how the declarative processes compare for the stake-holders.
Note that $|\UniqueTraces(D_{AD1})| = 16$ and $|\UniqueTraces(D_{AD2})| = 8$.

Before doing this, let us reiterate a point made at the beginning of this section. In this study, once a preference for one stakeholder is stated, it is natural to assume that that stake-holder will 
engage in some activities to force a preferential outcome. 
This certainly might be the case and, in the case of two stake-holders having somewhat complementary preferences, it might be considered as a competition.
Our purpose is not to study how effective such stake-holders are in forcing the outcome of a process. 
(A stake-holder might not be involved in the execution of a process or have any impact on the activities of that process.)
Instead, our goal is to analyse how `good' a declarative process is in relation to stated stake-holder preferences.

\begin{enumerate}
\item[(i)] 
On a given evening, the child has a desire to watch the bedtime show after dinner. The parents are interested in the child getting ready for bed.
To assign LTL expressions to these events, we have
$$G_1 = \dparticipation(5)  \mbox{ and } G_2 = \dparticipation(6).$$
Given these expressions, we find that there are 12 traces in $\UniqueTraces(D_{AD1})$ that contain activity 5, so $|\GoodTraces_1 (D_{AD1})| = 12$.
There are 8 activities in $\UniqueTraces(D_{AD1})$ that contain activity 6, so $|\GoodTraces_2 (D_{AD1})| = 6$.
These values allow us to calculate utilities for the declarative process $AD1$:
$$u_1(D_{AD1},S,G)  = \dfrac{\ln (1+12)}{\ln(1+16)} = 0.90531$$  and  $$u_2(D_{AD1},S,G)  = \dfrac{ln (1+6)}{\ln (1+16)} = 0.68682.$$
Likewise, for the second declarative process $D_{AD2}$ we find that $|\GoodTraces_1 (D_{AD2})| = 6$ and $|\GoodTraces_2 (D_{AD1})| = 8 $, from which we calculate the utilities:
$$u_1(D_{AD2},S,G)  = \dfrac{\ln (1+6)}{\ln(1+8)} = 0.88562$$ and $$ u_2(D_{AD2},S,G)  = \dfrac{\ln (1+8)}{\ln (1+8)} = 1.$$
The first stake-holders utility is better with process AD1 whereas the opposite is true for the second stake-holder.
\item[(ii)] 
On a given evening, the child has a particular wish to do their jigsaw and then watch the bedtime show before tidying the jigsaw.
The parents, for some reason or another, would rather that the child not watch television after dinner and before going to bed.
The LTL expressions corresponding to these propositions are 
\begin{align*}
G'_1= & \dparticipation(3) \wedge \dparticipation(5) \wedge \dsucc(3,5)\\ & \wedge \dsucc(5,4)\\
G'_2 =& \neg \,  \dparticipation(5).
\end{align*}
\end{enumerate}
There are 2 traces in $\UniqueTraces(D_{AD1})$ that satisfy $G'_1$ and 4 traces in $\UniqueTraces(D_{AD1})$ that satisfy $G'_2$.
There is a single trace in $\UniqueTraces(D_{AD2})$ that satisfies $G'_1$ and 2 traces in $\UniqueTraces(D_{AD2})$ that satisfy $G'_2$.
The utilities for the two different stake-holders for both declarative processes are now:
\begin{align*}
u_1(D_{AD1},S,G') & = \dfrac{\ln (1+2)}{\ln (1+16)} = 0.38776\\  u_2(D_{AD1},S,G') &= \dfrac{\ln (1+4)}{\ln (1+16)} = 0.56806\\
u_1(D_{AD2},S,G') & = \dfrac{\ln (1+1)}{\ln (1+16)} = 0.24465\\  u_2(D_{AD2},S,G') &= \dfrac{\ln (1+2)}{\ln (1+16)} = 0.38776.
\end{align*}
For this choice of $G'$, we see that both users would therefore have a preference for process AD1 over AD2.

{\bf{Discussion:}} In this example we have considered the two declarative processes $D_{AD1}$ and $D_{AD2}$. 
We have seen how we can compare these two processes with respect to two different preferences for the two different stake-holders. 
There is no reason to limit the number of stake-holders to two. 
Our method shows that in one of these cases, the utilities calculated indicate a clear preference for one declarative process over another. 
For the other set of preferences, that is not the case.
\end{example}

\begin{example}[Patient Handler 1]\label{patienthandler1}
Let us consider an example of a declarative process from \cite{vda2009} that is illustrated in Figure~\ref{ph1}.
This is a process for handling a patient at the first aid department in a hospital with a suspected arm fracture and comprises eight activities.
The patient is initially examined by a medical professional (activity 1) and the {\sf{init}} constraint on this activity means it is the first activity that occurs in any execution of this process. 
Activity 5 `medication' shares no constraints with any other processes, however the {\sf{init}} constraint on activity 1 forbids activity 5 from occurring first.
Two constraints warrant further explanation:
\begin{itemize}
\item The 1of4 constraint indicates that at least one of the four constraints 3,4,6,8 must happen. This constraint is not conditional on other constraints or activities, and so rules out the situation whereby activity 1 happens (patient examined) followed by them being given medication (activity 5) for what is a sore arm.
We are not sure why this possibility was ruled out in the original model but will not alter it since it adds to the diversity of the underlying process.
Furthermore, since we are dealing with representatives of traces, one could argue that this is represented by the trace $(1,5,8)$ whereby the patient simply chooses not to wear a sling.
\item The `optional response' from activity 3 to activity 7 is different to the normal response in that if activity 3 happens then 7 may or may not occur afterwards. However if activity 3 does not occur then activity 7 certainly cannot occur. Of course neither 3 nor 7 need occur and this constraint is still satisfied.
\end{itemize}

The unique traces of the declarative process $D_{PH1}$ are summarized in Figure~\ref{ph1traces}.
Let us now consider some stakeholders in this process. 
Recall that stakeholders do not have to have an active role in a process but may have some clear preferences regarding observed executions of the process.

\begin{description}
\item[Stakeholder $S_1$] The first stakeholder is the patient who is presenting to the emergency department. 
This patient has a phobia of surgery and is averse to medication. The LTL proposition that models the stakeholder's preferences is
$$G_1  = \neg \dparticipation(3) ~ \wedge ~ \neg \dparticipation(5).  $$
\item[Stakeholder $S_2$] The second stakeholder is the X-ray department who are overwhelmed by the number of X-rays that are needed.
The LTL proposition that models the stakeholder's preferences is
$$G_2  =  \neg\dparticipation(2).$$
\item[Stakeholder $S_3$] The third stakeholder is the surgery department who have a very small team and do not have the resources to 
dedicate to removing casts before surgery. The LTL proposition that models the stakeholder's preferences is
\begin{align*}
G_3 =& \neg (\dparticipation(3) \wedge \dparticipation(6)) \\
& \vee ~ (\dparticipation(3) \wedge \dparticipation(6) \wedge \dsucc(3,6)).
\end{align*}
\item[Stakeholder $S_4$] The fourth stakeholder is the hospital itself. Resources are often scare and a patient who needs multiple resources can 
be costly (both in work and time) for the hospital. An avoidance of the overuse of multiple costly resources is preferred. 
An instance of this is a patient who is initially given a sling, their arm does not improve, and so an X-ray indicates a fixation is the best option. 
This fixation does not solve the problem and surgery is required, followed by rehabilitation. 
The hospital has noticed that in the past there have been several such `expensive' instances among patients who have revisited multiple times.
This process execution is represented by the unique trace $\sigma = (1,8,2,4,3,7)$.
The LTL proposition that models the stakeholder's preferences is
\begin{align*}
G_4 = & \neg (\dparticipation(1) \wedge \dsucc(1,8) \wedge \dsucc(8,2) \\ & \wedge \dsucc(2,4) \wedge \dsucc(4,3) \wedge \dsucc(3,7)).\end{align*}
\item[Stakeholder $S_5$] The fifth stakeholder is the pharmaceutical industry. In this instance it benefits when patients are prescribed medication or 
their medications are used during surgery. The LTL proposition that models the stakeholders preferences for this process is
$$G_5=  \dparticipation(3) \vee \dparticipation(5).$$
\end{description}

Examining the traces in $\UniqueTraces(D_{PH1})$ with respect to the five stakeholders we find
\begin{align*}
(\GoodTraces_1(D_{PH1}),\ldots,\GoodTraces_5(D_{PH1})) \\ = (11, 3, 389, 452, 448).
\end{align*}
This gives the following collection of utilities for the stakeholders in $T_{PH1}=(D_{PH1},S,G)$:
\begin{align*}
u_1(T_{PH1}) & = \dfrac{\ln (1+11)}{\ln (1+459)} = 0.40529\\
u_2(T_{PH1}) & = \dfrac{\ln (1+3)}{\ln (1+459)}  = 0.22610 \\
u_3(T_{PH1}) & = \dfrac{\ln (1+389)}{\ln (1+459)} = 0.97308  \\
u_4(T_{PH1}) & = \dfrac{\ln (1+452)}{\ln (1+459)} = 0.99750 \\
u_5(T_{PH1}) & = \dfrac{\ln (1+448)}{\ln (1+459)} = 0.99605.
\end{align*}
This gives the stakeholder utility vector $u(T_{PH1}) = (0.40529, 0.22610, 0.97308, 0.99750, 0.99605)$ for the declarative stakeholder system $T_{PH1} = (D_{PH1},S,G)$.

\begin{example}[Patient Handler 2]\label{patienthandler2}
In this example we will consider a modified patient handler process that we call Patient Handler 2 and is motivated by an example given in Mertens et al.~\cite{mertens}. 
The process we look at is a simplified version of the one given in \cite{mertens} since 
in that paper the authors introduced a more general declarative framework that captured aspects of a healthcare process that was not captured by the original declarative process given in \cite{vda2009}.
The concerns of \cite{mertens} are quite different to ours and our purpose in looking at that declarative process is to have a second process to compare the first process PH1 to. 

With these points in mind we will describe two simplified versions of Patient Handler 2 that we will refer to as PH2a and PH2b. The difference between these two is that PH2b contains an additional activity (activity 11 in Figure~\ref{ph2diag}) that does not feature in Patient Handler 1 but which we find interesting to include for comparison purposes.

In this modification of Patient Handler 1, we have attempted to preserve the labelling of similar activities. In Patient Handler 1 there was one activity (activity 5) for the patient being given medication. Patient Handler 2 considers a collection of different medications, and those medications that correspond to the old activity 5 have been labelled 51, 52, and 53 in order to provide a comparison between the two processes. Moreover, there are now activities for prescribing anti-inflammatory and anti-coagulation drugs after surgery. These did not feature in PH1.
Information about the unique traces of the two processes $D_{PH2a}$ and $D_{PH2b}$ can be found in Figures~\ref{ph2atraces} and \ref{ph2btraces}, respectively.

Let us consider stakeholders in these processes precisely as with did in PH1. As activity 11 (physiotherapy) does not impact on any of the preferences for the stakeholders $S_1$, $\ldots$, $S_6$, we will assume that the expressions for stakeholder preferences for PH2a and PH2b are the same. 

\begin{description}
\item[Stakeholder $S_1$] The patient who has a phobia of surgery and is averse to medication. The LTL proposition that models the stakeholder's preferences is now
\begin{align*}
G_1  =  \neg ( & \dparticipation(3) \vee \dparticipation(51) \\ & \vee \dparticipation(52) \vee \dparticipation(53)\\ & \vee \dparticipation(9) \vee \dparticipation(10) ).  
\end{align*}
\item[Stakeholder $S_2$] The overwhelmed  X-ray department.
The LTL proposition that models the stakeholder's preferences is the same as before
$$G_2  =  \neg\dparticipation(2).$$
\item[Stakeholder $S_3$] The under-staffed surgery department.
The LTL proposition that models the stakeholder's preferences is the same as before
\begin{align*}
G_3 = & \neg ( \dparticipation(3) \wedge \dparticipation(6)) \\ & \vee  (\dparticipation(3) \wedge \dparticipation(6) \wedge \dsucc(3,6)).
\end{align*}
\item[Stakeholder $S_4$] The hospital that would like to cut down on a particularly common overuse of its resources.
The LTL proposition that models the stakeholder's preferences is the same:
\begin{align*}
G_4 = \neg ( & \dparticipation(1) \wedge \dsucc(1,8)  \wedge \dsucc(8,2) \\ & \wedge \dsucc(2,4) \wedge \dsucc(4,3) \wedge \dsucc(3,7)).
\end{align*}
\item[Stakeholder $S_5$] The pharmaceutical industry that benefits when patients are prescribed or given medications.
The LTL proposition that models the stakeholder's preferences is
\begin{align*}
G_5 = & \dparticipation(3) \vee \dparticipation(51) \\ & \vee \dparticipation(52)  \vee \dparticipation(53) \\ & \vee \dparticipation(9) \vee \dparticipation(10).\end{align*}
\end{description}

Examining the traces in $\UniqueTraces(D_{PH2a})$ and $\UniqueTraces(D_{PH2b})$ with respect to the five stakeholders we find
\begin{align*}
\MoveEqLeft (\GoodTraces_1(D_{PH2a}),\ldots,\GoodTraces_5(D_{PH2a})) \\ = & (324, 1457048, 16316590, 16285678, 16316266)
\end{align*}
 and 
\begin{align*}
\MoveEqLeft (\GoodTraces_1(D_{PH2b}),\ldots,\GoodTraces_5(D_{PH2b})) \\ = & (1952,  16316590, 199143708, 198749700, 199141756).
\end{align*}
The collection of utilities for the declarative stakeholder system $T_{PH2a}=(D_{PH2a},S,G)$ is:
\begin{align*}
u_1(T_{PH2a}) & = \dfrac{\ln (1+324)}{\ln (1+16316590)} = 0.34826\\
u_2(T_{PH2a}) & = \dfrac{\ln (1+1457048)}{\ln (1+16316590)} = 0.85454\\
u_3(T_{PH2a}) & = \dfrac{\ln (1+16316590)}{\ln (1+16316590)} = 1.00000 \\
u_4(T_{PH2a}) & = \dfrac{\ln (1+16285678)}{\ln (1+16316590)} = 0.99989\\
u_5(T_{PH2a}) & = \dfrac{\ln (1+16316266)}{\ln (1+16316590)} = 0.99999.
\end{align*}
This gives the stakeholder utility vector $u(T_{PH2a}) = (0.34826,0.85454,1.00000,0.99989,0.99999)$.
The collection of utilities for the declarative stakeholder system $T_{PH2b}=(D_{PH2b},S,G)$ is:
\begin{align*}
u_1(T_{PH2b}) & = \dfrac{1+\ln (1952)}{1+\ln (199143708)} = 0.39651\\
u_2(T_{PH2b}) & = \dfrac{1+\ln (16316590)}{1+\ln (199143708)} = 0.86908\\
u_3(T_{PH2b}) & = \dfrac{1+\ln (199143708)}{1+\ln (199143708)} = 1.00000\\
u_4(T_{PH2b}) & = \dfrac{1+\ln (198749700)}{1+\ln (199143708)} = 0.99990 \\
u_5(T_{PH2b}) & = \dfrac{1+\ln (199141756)}{1+\ln (199143708)} = 0.99999
\end{align*}
This gives stakeholder utility vector $u(T_{PH2b}) = (0.39651,0.86908,1.00000,0.99990,0.99999)$.
\end{example}

\section{Comparing processes using stakeholder utility vectors}
The stakeholder utility vector of a declarative stakeholder system is a vector of 
values between 0 and 1 in which the $i$th entry represents the utility to user $i$. 
With the notation we have been using, we have the 
$$u(T) = (u_1(T),\ldots,u_m(T)).$$
The optimal outcome for all stakeholders would be for $u(T) = (1,1,\ldots,1)$.
Given a collection of declarative processes $D_1,\ldots,D_t$, in order to determine which declarative process is `optimal' with respect to all stakeholders, 
one can simply determine the stakeholder utility vector that is closest to $(1,1,\ldots,1)$ in Euclidean $m$-space.
This is done by using the Euclidean norm, also known as the $\ell_2$-norm, of a vector in $m$-space:
$$\norm{(v_1,\ldots,v_m)} ~:=~ \sqrt{ v_1^2+\ldots+v_m^2}.$$
Using this we determine which declarative stakeholder system $T_i=(D_i,S,G)$ minimizes 
$$\min_{1\leq i\leq n} \norm{u(T_i) - (1,1,\ldots,1)}.$$
Let us define $H(T_i) :=  \norm{u(T_i) - (1,1,\ldots,1)}$.

\begin{example}\label{justbefore}
Consider Examples~\ref{patienthandler1} and \ref{patienthandler2}.
We have 
\begin{align*}
\MoveEqLeft H(T_{PH1}) =\\ \MoveEqLeft \norm{ 
(0.40529, 0.2261, 0.97308, 0.9975, 0.99605)
- (1,1,1,1,1)} \\ &= 0.97640 \\
\MoveEqLeft H(T_{PH2a}) = \\ \MoveEqLeft \norm{ 
(0.34826,0.85454,1,0.99989,0.99999)
- (1,1,1,1,1)} \\ & = 0.66778 \\
\MoveEqLeft H(T_{PH2b}) = \\ \MoveEqLeft \norm{ 
(0.39651,0.86908,1,0.9999,0.99999)
- (1,1,1,1,1)}\\ & = 0.61753.
\end{align*}
The minimum of these is $H(T_{PH2b})$ and so the declarative stakeholder system $PH2b$ is the optimal choice from the set $\{PH1,PH2a,PH2b\}$.
\end{example}

The method of $\ell_2$-norm minimization is known to be sensitive to a moderate change in one of the values. 
In our setting this might amount to  a single stakeholder's utility changing dramatically.
In order to make our method more robust to such changes, we consider the optimal declarative processes 
for all possible subsets of stakeholders and make an informed decision from this based on what we observe. 

Given a subset $X=\{i_1,\ldots,i_k\}$ of stakeholders $S=(S_1,\ldots,S_m)$, let us consider the reduced stakeholder utility vector for those entries given in $X$:
$$u^{(X)} := (u_{i_1}(T),\ldots,u_{i_k}(T))$$
For every such vector $u^{(X)}$ we will consider the declarative process that minimizes the $\ell_2$-norm from it to the best case utility $(1,1,\ldots,1) \in \mathbb{R}^{|X|}$.
The result will be a list of $2^{|S|}-1$ declarative processes. 
Let $H^{(X)}(T) := \norm{u^{(X)}(T) - (1,1,\ldots,1)}$.
We then use the information given in this list to determine the optimal choice of declarative process for all stakeholders.

\begin{example}\label{finalexample}
Let us consider PH1, PH2a and PH2b in Example~\ref{justbefore}.
The table in Figure~\ref{finaltable} records the reduced stakeholder utility vectors for all possible subsets of stakeholders.
For each we record in the rightmost column the optimal choice of process.

\begin{figure*}
\begin{center}
\begin{tabular}{c|c|c} \hline
Subset & \multirow{3}{*}{$\left(H^{(X)}(T_{PH1}),~ H^{(X)}(T_{PH2a}),~ H^{(X)}(T_{PH2b})\right)$} & Process \\
$X=\{S_{i_1},\ldots,S_{i_k}\}$ & & $j\in \{PH1,PH2a, PH2b\}$ \\ 
of stakeholders && that minimizes $H(T_j)$\\ \hline \hline
$\{S_1\}$ & (0.59471, 0.65174, 0.60349) &  PH1 \\
$\{S_2\}$ & (0.77390, 0.14546, 0.13092) &  PH2b \\
$\{S_3\}$ & (0.02692, 0.00000, 0.00000) & PH2a \\
$\{S_4\}$ & (0.00250, 0.00011, 0.00010) & PH2b \\
$\{S_5\}$ & (0.00395, 0.00001, 0.00001) & PH2a \\
$\{S_1,S_2\}$ & (0.97601, 0.66778, 0.61753) & PH2b \\
$\{S_1,S_3\}$ & (0.59532, 0.65174, 0.60349) & PH1 \\
$\{S_1,S_4\}$ & (0.59472, 0.65174, 0.60349) & PH1 \\
$\{S_1,S_5\}$ & (0.59472, 0.65174, 0.60349) & PH1 \\
$\{S_2,S_3\}$ & (0.77437, 0.14546, 0.13092) & PH2b \\
$\{S_2,S_4\}$ & (0.77390, 0.14546, 0.13092) & PH2b \\
$\{S_2,S_5\}$ & (0.77391, 0.14546, 0.13092) & PH2b \\
$\{S_3,S_4\}$ & (0.02704, 0.00011, 0.00010) & PH2b \\
$\{S_3,S_5\}$ & (0.02721, 0.00001, 0.00001) & PH2a \\
$\{S_4,S_5\}$ & (0.00467, 0.00011, 0.00010) & PH2b \\
$\{S_1,S_2,S_3\}$ & (0.97638, 0.66778, 0.61753) & PH2b \\
$\{S_1,S_2,S_4\}$ & (0.97602, 0.66778, 0.61753) & PH2b \\
$\{S_1,S_2,S_5\}$ & (0.97602, 0.66778, 0.61753) & PH2b \\
$\{S_1,S_3,S_4\}$ & (0.59532, 0.65174, 0.60349) & PH1 \\
$\{S_1,S_3,S_5\}$ & (0.59533, 0.65174, 0.60349) & PH1 \\
$\{S_1,S_4,S_5\}$ & (0.59473, 0.65174, 0.60349) & PH1 \\
$\{S_2,S_3,S_4\}$ & (0.77437, 0.14546, 0.13092) & PH2b \\
$\{S_2,S_3,S_5\}$ & (0.77438, 0.14546, 0.13092) & PH2b \\
$\{S_2,S_4,S_5\}$ & (0.77391, 0.14546, 0.13092) & PH2b \\
$\{S_3,S_4,S_5\}$ & (0.02732, 0.00011, 0.00010) & PH2b \\
$\{S_1,S_2,S_3,S_4\}$ & (0.97639, 0.66778, 0.61753) & PH2b \\
$\{S_1,S_2,S_3,S_5\}$ & (0.97639, 0.66778, 0.61753) & PH2b \\
$\{S_1,S_2,S_4,S_5\}$ & (0.97602, 0.66778, 0.61753) & PH2b \\
$\{S_1,S_3,S_4,S_5\}$ & (0.59534, 0.65174, 0.60349) & PH1 \\
$\{S_2,S_3,S_4,S_5\}$ & (0.77438, 0.14546, 0.13092) & PH2b \\
$\{S_1,S_2,S_3,S_4,S_5\}$ & (0.97640, 0.66778, 0.61753) & PH2b \\ \hline
\end{tabular}
\end{center}
\caption{Table for Example~\ref{finalexample}.\label{finaltable}}
\end{figure*}

\medskip

\begin{itemize}
\item Of the $5+1=6$ subsets $X$ of $S$ having size $|X| \geq |S|-1=4$, we see that 
the optimal choice is PH2b since it appears as the answer in 5 of the 6 cases.
\item Of the $10+5+1=16$ subsets $X$ of $S$ having size $|X|\geq |S|/2$ we see that 
the optimal choice is PH2b since it appears as the answer in 12 of the 16 cases.
\item 
Of the $2^5-1=31$ non-empty subsets $X$ of $S$ we see that 
the optimal choice of strategy is PH2b since it appears as the answer in 20 of the 31 cases.
\end{itemize}
Each of these agrees with what we found in Example~\ref{justbefore} when the subset of interest $X$ is the set of all stakeholders $S$.
There appears to be no clear reason to consider that either of the other processes could be optimal for this particular collection of stakeholders.

\end{example}

The previous example highlights the reasoning and analysis that allows us to conclude that a particular declarative process is the optimal one for particular set of stakeholders.
The data in Table~\ref{finaltable} might have been different and so to address this let us make the following comments in relation to what one should do in that event:

\subsection{Choosing between optimal answers for $X=S$ and $X \subset S$}\label{careful}
Let us suppose that the optimal choice in Figure~\ref{finaltable} was not necessarily the same as the optimal choice for the other subsets of $S$ that we considered.
How should one go about deciding on an answer that can be considered robust?
There are several things to consider in this setting. In the list below the word `optimal' signifies it is optimal with respect to frequency.
Suppose that 
\newcommand{\one}{\mathsf{all}}
\newcommand{\two}{\mathsf{almostall}}
\newcommand{\three}{\mathsf{morethanhalf}}
\newcommand{\four}{\mathsf{any}}
\begin{itemize}
\item Process $T_{\one}$ is optimal for $X=S$.
\item Process $T_{\two}$ is optimal among $$\{X ~:~ X \subseteq S\mbox{ with } X \neq \emptyset \mbox{ and } |X| \geq  |S|-1\}.$$
\item Process $T_{\three}$ is optimal among $$\{X ~:~ X \subseteq S\mbox{ with } X \neq \emptyset \mbox{ and } |X| \geq  |S|/2\}.$$
\item Process $T_{\four}$ is optimal among $$\{X ~:~ X \subseteq S\mbox{ with } X \neq \emptyset\}.$$
\end{itemize}
With these notions defined we can make the following observations.
\begin{enumerate}
\item If ${\one}={\two}$ then, in the absence of any other information, the optimal choice of declarative process is clearly $D_{\one}$.
\item 
If ${\one}\neq {\two}$ then it would seem that the optimal choice of declarative process needs to be considered. 
If our motivation is to decide on a process that is strictly optimal for {\bf{all}} stakeholders then $D_{\one}$ is that process.
However process $D_{\two}$ could be considered as an alternative in the event that (a) the difference between $H(T_{\one})$ and $H(T_{\two})$ is very small 
and/or 
(b) if there is uncertainty about whether one of the stakeholders should be considered an active stakeholder at the time.
If ${\two}={\three}$ then a stronger case could be made for choosing $D_{\two}$ over $D_{\one}$, however if ${\two} \neq {\three}$ then we cannot make that case.
\item The process $D_{\four}$ might seem an odd one to consider. However it can come into play in the event that we do not know what collection of (declared) stakeholders 
are active stakeholders in some execution of a process. 
\end{enumerate}

We end this section with a brief summary of the method:
\begin{center}\fbox{\noindent
\begin{minipage}{0.9\columnwidth}
{\bf To compare the declarative stakeholder systems $T_1 = (D_1,S,G)$, $\ldots$, $T_n=(D_n,S,G)$:}\\
\hrule
\ \\[-1em]
\begin{description}[topsep=-20pt,itemsep=2pt,parsep=2pt] 
\item[Step 1] We remind ourselves that due regard must be given to stakeholder preferences in light of different declarative systems (cf. expression $G_1$ for stakeholder $S_1$ in Patient Handler 2 to the $G_1$ in Patient Handler 1).
\item[Step 2] Use Theorem~\ref{uthm} to calculate the stakeholder utility vectors $u(T_i)=(u_1(T_i),\ldots,u_m(T_i))$ for all $n$ different declarative stakeholder systems.
\item[Step 3] For each of the $2^m-1$ non-empty subsets $X$ of $\{S_1,\ldots,S_m\}$,  Determine the $j \in \{1,\ldots,n\}$ that minimizes $H^{(X)}(T_j)$ and denote this index by $\Answer(X)$.
\item[Step 4] Given the collection of optimal answers $\{ \Answer(X) ~:~ S \supseteq X \neq \emptyset \}$, analyse the four different answers 
$(T_{\one},T_{\two},T_{\three},T_{\four})$ 
one gets from Section~\ref{careful} 
in order to determine the optimal process for this collection of declarative stakeholder systems.
\end{description}
\end{minipage}
}
\end{center}

\section{Discussion}

In this paper we have presented a method for quantifying stakeholder utility for declarative processes.
This method allows us to decide which of a collection of declarative processes is optimal in light of stakeholder preferences.
As far as we are aware, there is currently no known method in the literature for performing this type of quantification and comparison.

Our method relies on constructing a set of representative traces for a declarative process followed by
considering a function on subsets of these representatives that are deemed good with respect to stakeholder preferences.
The assumptions we have used in this analysis are elementary and in subsequent work more general versions might be considered. 
For example, attributing values to traces for particular stakeholders that are different to the indicator function on traces used to determine membership of $\GoodTraces$.
Our technique will faithfully model declarative systems for which unique traces are good representatives, and we consider this to be the case with processes whose activities happen at most once during any execution. 
Many real life processes that one finds in the application areas of business process models, healthcare workflow analysis, and policy process analysis can be seen to be such systems.
Again, in subsequent work it may well be worth considering allowing activities to happen, say, at most twice but such an allowance will naturally lead to further complexity in the calculations that are necessary.

A challenge in this analysis is constructing the set of unique traces of a declarative process.
This challenging enumeration problem is a computationally demanding task since the more activities in a declarative process the more time and space this will take.
However, this can be overcome in part by a combinatorial analysis of the declarative process graph as was evidenced by the discussion of the {\it{core}} of a {\it{declarative process}} in Section~\ref{enumeration} along with 
the algorithmic considerations.
The paper lays the foundation for further investigations into this area of declarative system analysis 
and provides the technical backdrop to new research into policy process analysis (see \cite{cov19}).

\PHonetracesfigure

\PHtwoatracesfigure

\PHtwobtracesfigure

\end{document}